\newcommand{\Xc}{{\mathcal{X}}}
\newtheorem{prop}{Proposition}
\newtheorem{definition}{Definition}
\newtheorem{lemma}{Lemma}
\title{\LARGE \bf 
Online Distribution Shift Detection via Recency Prediction
}
\author{\\
\normalsize Rachel Luo, Rohan Sinha, Yixiao Sun, Ali Hindy, Shengjia Zhao, Silvio Savarese,\\
\normalsize Edward Schmerling, Marco Pavone
\thanks{R. Luo, R. Sinha, Y. Sun, A. Hindy, S. Zhao, S. Savarese, E. Schmerling, and M. Pavone are with Stanford University, Stanford, CA, USA; \{rsluo, rhnsinha, alvinsun, ahindy, sjzhao, ssilvio, schmrlng, pavone\}@stanford.edu.}
\noindent \thanks{The NASA University Leadership Initiative (grant \#80NSSC20M0163) provided funds to assist the authors with their research, but this article solely reflects the opinions and conclusions of its authors and not any NASA entity.}
}
\date{}
\begin{document}

\maketitle
\thispagestyle{empty}
\pagestyle{empty}

\begin{abstract}

    When deploying modern machine learning-enabled robotic systems in high-stakes applications, detecting distribution shift is critical. However, most existing methods for detecting distribution shift are not well-suited to robotics settings, where data often arrives in a streaming fashion and may be very high-dimensional. In this work, we present an online method for detecting distribution shift with guarantees on the false positive rate --- i.e., when there is no distribution shift, our system is very unlikely (with probability $< \epsilon$) to falsely issue an alert; any alerts that are issued should therefore be heeded. Our method is specifically designed for efficient detection even with high dimensional data, and it empirically achieves up to 11x faster detection on realistic robotics settings compared to prior work while maintaining a low false negative rate in practice (whenever there is a distribution shift in our experiments, our method indeed emits an alert). We demonstrate our approach in both simulation and hardware for a visual servoing task, and show that our method indeed issues an alert before a failure occurs.

\end{abstract}

\section{Introduction}

Machine learning (ML) models deployed in the real world often encounter test time inputs that do not follow the same distribution as the training time inputs because autonomous robots continuously encounter new situations when deployed; in other words, there is \textit{distribution shift} (also known as domain shift). However, standard machine learning practice operates under the assumption that the training and test distributions are identical, and thus learned models may not perform well under changed conditions. Consequently, methods that detect distribution shifts are necessary to maintain the reliability of modern ML-enabled systems, especially in high-stakes situations such as aircraft control, autonomous driving, or medical decision-making. However, most existing methods for detecting a distribution shift operate only in an offline batch setting, whereas in robotics, detecting distribution shifts in an online manner is particularly important: knowledge of gradually shifting distributions throughout continuous long-term deployment cycles can trigger safety-preserving interventions and subsequent model refinement or retraining.

\begin{figure*}[t]
    \centering
    \includegraphics[width=\linewidth]{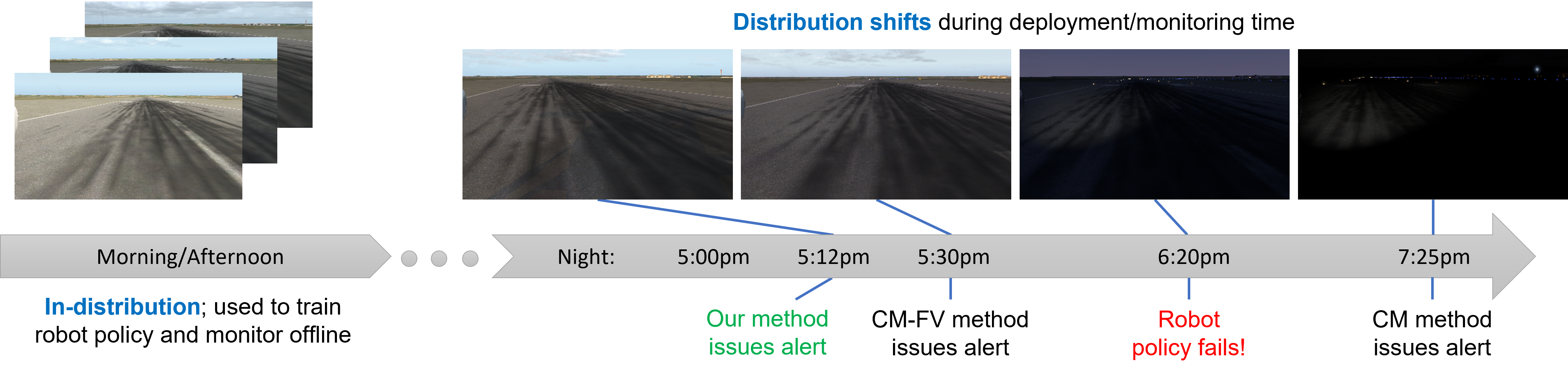}
    \caption{Illustration of our problem setting and high-level approach. Learning enabled robotics systems, such as a vision-based aircraft controller, are trained on data from a finite set of environments (e.g. images taken in the morning and afternoon). When deployed, these systems may operate in distribution-shifted conditions, resulting in erroneous predictions on out-of-distribution data. To improve safety, we design a warning system that can detect distribution shifts in a streaming fashion with a \textit{guaranteed} false positive rate. 
    }
    \label{fig:system}
\end{figure*}

Therefore, in this work, we consider the problem of detecting distribution shifts online when conditions shift gradually across episodes. In such cases, the ideal warning system satisfies three desiderata: 1) it quickly issues an alert \textit{before} undesirable or dangerous situations arise due to the magnitude of the distribution shift, 2) it has valid performance guarantees in an online setting, and 3) it achieves a low false positive rate, as any system that gives too many extraneous warnings will not be useful in practice. Accordingly, our framework can lead to desirable safety or performance outcomes in applications with a strong notion of task repetition. For example, consider (a fleet of) autonomous aircraft repeatedly taking off from a set of runways during a continuous deployment so that each taxiing sequence constitutes an episode drawn from a task distribution. The planes' sensors may degrade over time, or lighting conditions may change significantly over the course of the day, causing operational conditions to drift away from the training regime. Our method may be used to alert to this shift before performance degrades significantly and to prompt a manufacturer to improve the robot's software for these shifted conditions. 
Note that the nature of the distribution shift detection problem is different from that of anomaly detection (e.g. \cite{CPforML}, \cite{Xu2021ConformalAD}, \cite{Kaur2023CODiT}); it is arguably impossible to make a distributional claim without multiple samples of evidence to reason about whether the distribution has shifted or whether the system is simply experiencing a rare event.
Therefore, our algorithm is not intended to, e.g., trigger safety interventions within an episode in response to a sudden rare or ``out-of-distribution'' event. Instead, it should inform decision-making about successive deployments and guide iterative development of system components \cite{SinhaSharmaEtAl2022}. 

Our algorithm is designed to detect distribution shifts as quickly as possible; therefore, it allows system designers to \textit{proactively} detect gradually shifting conditions before they lead to failure. This is in contrast to a monitor on an output metric (e.g., airplane takeoff success rate, package grasp success rate, classification accuracy, etc.), which may issue a warning only \textit{after} the system performance has already degraded. If a task performance metric is the only feature considered, one could thus be lulled into a false sense of security before performance suddenly drops catastrophically in situations where the input distribution shift is gradual, but the task performance is discontinuous in shift; for example, a bolt could slowly loosen (but remain serviceable) until it suddenly falls out. 

Additionally, our approach is practically useful because well-engineered systems are often equipped with methods to catch or compensate for various types of errors. This means that these errors may remain hidden until a buildup occurs such that the system can no longer compensate (for instance, simultaneous distribution shifts for multiple components). Thus, monitoring for distribution shifts in individual components (in particular, for insidious gradual distribution shifts) can be very valuable.

\textbf{Contributions:} 1) We present a method that quickly alerts users when a distribution shift has occurred while also providing a guaranteed (low) false positive rate below a user-defined risk tolerance, in an online setting. 
When there is no distribution shift, no warning will be issued with at least $1-\epsilon$ probability, so any emitted warnings should be heeded. 
Our method is applicable to general time series data; the input could take any form, including images, system dynamics, or any other set of features. Specifically, we target gradual distribution shifts over episodic situations. 
2) Our approach improves upon existing methods, which are either offline or not tailored for high-dimensional inputs, because we directly train a neural network model to predict whether a new sample at test time differs meaningfully from previous samples and construct a martingale from the model outputs to issue warnings with guarantees. 3) Because we construct a warning signal that grows exponentially under distribution shift, our approach empirically allows us to detect online shifts more rapidly than existing approaches. 4) We empirically evaluate our approach on photorealistic simulations of an autonomous aircraft taxiing down a runway using a camera perception module in the X-Plane 11 flight simulator, and in hardware on a free-flyer space robotics testbed for vision-based navigation. Our approach detects gradually degrading conditions up to eleven times as rapidly as a baseline while maintaining a guaranteed false positive rate of 1\%. Moreover, our proposed method quickly detects a distribution shift before a failure occurs when there is one across all our experiments, demonstrating that our method performs well on realistic examples. We conclude that our method is attractive for robotic applications as it is practical and tailored to detect shifts in the high-dimensional and sequentially observed inputs of ML models, like perception systems, used in a robotic autonomy stack.

\section{Related Work}
\label{sec:related}

Machine learning models can perform poorly and erratically on test data drawn from a distribution that differs from the training distribution. Mitigating the impact of distribution shifts is a long-standing challenge, and empirical studies show that subtle shifts still severely impact the performance of state-of-the-art models (e.g., see \cite{hendrycks2019Benchmarking, GeirhosJacobsenEtAl2020, KohSagawaEtAl2021, MillerTaoriEtAl2021}). Furthermore, as learned components find increasing use in robotics stacks, erroneous predictions induced by distribution shifts can cause dangerous system-level failures in safety-critical applications. For example, a robot using a classification model trained on daytime data can have accidents when deployed at night. Therefore, we must develop methods to detect distribution shifts to avoid system failures in shifted conditions.

Traditional approaches for detecting distribution shift use statistical hypothesis testing to determine whether the test-time distribution differs from the training distribution (\cite{GrettonBorgwardtEtAl2012, Rabanser2019FailingLA, Kulinski2020FeatureSD, Kamulete2021TestFN}). For example, \cite{GrettonBorgwardtEtAl2012} develop a hypothesis test based on evaluating the maximum mean discrepancy (MMD) and similarly, \cite{Rabanser2019FailingLA} use a dimensionality reduction technique followed by a statistical two-sample test to compare the two distributions. \cite{Kulinski2020FeatureSD} develop conditional distribution hypothesis tests and propose a score-based test statistic for localizing distribution shift. In robotics, \cite{FaridVeerEtAl2022} apply a two-sample procedure to detect when a robot is operating under shifted conditions that harm its performance. However, these methods are typically designed for an offline (batch) setting, and there is no obvious way to use these methods online without either losing the guarantee, or being very inefficient statistically.

Another approach for detecting distribution shift, introduced by \cite{Vovk2003TestingEO}, uses conformal martingales to test for exchangeability and is currently the only technique for detecting distribution shift online (\cite{Vovk2021TestingRO, Vovk2021RetrainON, Eliades2020AHB, Volkhonskiy2017InductiveCM, Fedorova2012PluginMF, Ho2005AMF, podkopaev2021tracking, hu2020distributionfree}). These methods use conformal prediction to obtain p-values for each sample at test time, and then use these p-values to define a martingale. If the martingale grows large, then there has likely been a distribution shift.
\cite{Vovk2021RetrainON} is the most recent and most relevant to our work, as it combines conformal prediction with martingale theory to obtain an online distribution shift detector with a guarantee limiting the false positive rate. This work demonstrates good efficiency, i.e., detects distribution shifts quickly, on the Wine Quality dataset, which contains 11-dimensional feature vectors. 

However, these martingales generally do not perform well on more complex or higher-dimensional robotics settings (e.g. with image data), and they are not directly optimized to solve the problem of detecting distribution shift in an end-to-end manner. 
Additionally, these methods will only detect a distribution shift if the shift affects the specific predictor used to define the nonconformity score, which may be undesirable if there are other metrics that are also important, or if the overall predictor performance stays the same but the predictor now fails more often in more critical situations. Instead, we design a more efficient martingale based on a learned classifier; our martingale detects distribution shifts more quickly and does not have these drawbacks.

\section{Background}
\label{sec:background}

A martingale is a stochastic process (a sequence of random variables) where the conditional expectation of the next value, given all previous values, is the same as the most recent value.

\vspace{-2mm}
\begin{definition}[Martingale]
\label{def:martingale} 
A martingale is a sequence of random variables $R_1$, $R_2$, \dots, such that $E[|R_{n}|] < \infty$ and
$ E[R_{n+1} | R_1, \dots, R_n] = R_n $ for all $n$.
\end{definition}

Many stochastic processes of interest are martingales, and therefore there is a well-developed body of statistical theory on martingales that we can draw from~\cite{doob1971martingale,hall2014martingale,vovk2005algorithmic,shafer2019game}. Doob's martingale inequality~\cite{williams1991probability} formalizes the notion that the probability that a martingale grows very large is very low.

\vspace{-2mm}
\begin{prop}[Doob's Inequality]
\label{prop:doob}
For a martingale $R_n$ indexed by an interval $[0, N]$, and for any positive real number $C$, it holds that
\vspace{-1mm}
\begin{equation}
    \mathrm{Pr}\left[\sup_{0 \leq n \leq N} R_n \geq C\right] \leq \frac{E[\max(R_N,0)]}{C}.
\end{equation}
\end{prop}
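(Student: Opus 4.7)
The plan is to partition the supremum event by the first crossing time and then invoke the martingale property on each piece. Let $\mathcal{F}_n$ denote the natural filtration of the process. I would define $A = \{\sup_{0 \leq n \leq N} M_n \geq C\}$ together with the disjoint events $A_k = \{M_k \geq C,\; M_j < C \text{ for all } j < k\}$ for $k = 0, 1, \ldots, N$, so that $A = \bigsqcup_{k=0}^{N} A_k$ and, crucially, $A_k \in \mathcal{F}_k$ (it only depends on $M_0, \ldots, M_k$).

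Next, I would use the martingale property, iterated to $E[M_N \mid \mathcal{F}_k] = M_k$, together with the fact that $A_k$ is $\mathcal{F}_k$-measurable, to write $E[M_N \mathbf{1}_{A_k}] = E[E[M_N \mid \mathcal{F}_k]\, \mathbf{1}_{A_k}] = E[M_k \mathbf{1}_{A_k}] \geq C \cdot \Pr[A_k]$, where the final inequality uses that $M_k \geq C$ on $A_k$. Summing over $k$ and using disjointness gives $E[M_N \mathbf{1}_A] \geq C \cdot \Pr[A]$.

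To close the argument, I would bound $E[M_N \mathbf{1}_A] \leq E[\max(M_N, 0)\, \mathbf{1}_A] \leq E[\max(M_N, 0)]$, which combined with the previous step yields $C \cdot \Pr[A] \leq E[\max(M_N, 0)]$; dividing by $C$ recovers the stated inequality. Integrability of the maximum is ensured by the martingale assumption $E[|M_n|] < \infty$.

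The main obstacle is essentially bookkeeping: making sure the first-crossing events $A_k$ are placed in the correct $\sigma$-algebra so that the tower property applies cleanly. Once that measurability point is in hand, the rest is a one-line chain of inequalities. A fully equivalent route would be to apply the optional stopping theorem to the submartingale $\max(M_n, 0)$ (which is a submartingale by conditional Jensen's inequality applied to the convex map $x \mapsto \max(x,0)$) at the bounded stopping time $\tau = \min\{n \leq N : M_n \geq C\} \wedge N$, but the partition-based derivation above is essentially this same optional stopping argument written out term by term.
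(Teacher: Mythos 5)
Your proof is correct: the first-crossing decomposition $A=\bigsqcup_k A_k$ with $A_k\in\mathcal{F}_k$, the tower property giving $E[M_N\mathbf{1}_{A_k}]=E[M_k\mathbf{1}_{A_k}]\ge C\Pr[A_k]$, and the final bound $E[M_N\mathbf{1}_A]\le E[\max(M_N,0)]$ combine to yield exactly the stated inequality, with integrability supplied by the $E[|M_n|]<\infty$ clause of the martingale definition. Note that the paper does not prove this proposition itself --- it is quoted as standard background with a citation to Williams --- so there is no in-paper argument to compare against; yours is the standard textbook derivation (equivalently, optional stopping applied to the submartingale $\max(M_n,0)$ at the first crossing time) and is complete.
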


In our work, we define a stochastic process $M_n$ based on the outputs of a trained predictive model. $M_n$ is a martingale if new data points observed at test time are \emph{exchangeable} with data points seen during training, and we can apply Doob's Inequality to bound the false positive rate of alerts that are issued. 
\vspace{-2mm}
\begin{definition}[Exchangeability]
\label{def:exchangeability} 
    A sequence of data points $X_1, X_2, \cdots, X_N$ is exchangeable if the probability of observing any permutation of $X_1, X_2, \cdots, X_N$ is equally likely.
\end{definition}

Under the hypothesis of exchangeability, the probability of $M_n$ growing large is small. In other words, if there is no distribution shift (the data points observed during training and after deployment are exchangeable), then the probability that our system falsely issues a warning ($M_n$ grows large) is small. Conversely, if the martingale grows large, then the data was likely not exchangeable, implying that a distribution shift occurred.

\section{Detecting Distribution Shift} 
\label{sec:method}

We propose a method for detecting distribution shift online in episodic robotics settings. Our method combines a learned, end-to-end approach with statistical martingale theory to issue alerts about distribution shifts quickly and with a guaranteed false positive rate.

\subsection{Problem Setup}

Let $D_{\mathrm{orig}} = (X_1, X_2, \cdots, X_n)$ be a sequence of past data points, where each point represents an episode of the robot executing in some environment, and let $D_{\mathrm{new}} = (X_1', X_2', \cdots ) $ be a sequence of new data points observed at test time. Formally, we aim to design a series of test functions
\begin{align} 
\psi_j: D_{\mathrm{orig}}, X_1', \cdots, X_j' \mapsto \lbrace T, F \rbrace \quad \forall j = 1,\  2, \cdots,
\end{align}
where the output T(rue) indicates that we have found a distribution shift (i.e., the $X_j'$ are not drawn from the same distribution as the original points $X_i$), and F(alse) indicates that we have not. 

We say that the test is $\epsilon$-sound if whenever there is no distribution shift, i.e., when the test data $(X_1', X_2', \cdots )$ are indeed exchangeable with $ D_{\mathrm{orig}}$, then 
$
    \Pr\big[\exists j, \psi_j (D_{\mathrm{orig}}, X_1', \cdots, X_j' ) = T \big] \leq \epsilon,
$
and this guarantee should hold for any distributions of $ D_{\mathrm{orig}}$ and test data  $(X_1', X_2', \cdots )$.
Intuitively, a test is $\epsilon$-sound if whenever there is no shift, a warning is never issued with high $(1-\epsilon )$ probability. 

Conversely, when there is a distribution shift, we want the test to issue a warning as soon as possible; i.e., we want a small $j$ such that $\psi_j$ outputs T(rue). Formally, we define the initial discovery time as the smallest $j$ such that $\psi_j$ is T(rue). 
While we will show that it is possible to guarantee soundness for \textit{any} data distributions, it is generally impossible to guarantee the initial discovery time (unless the test trivially issues a warning all the time). For example, in the case where the distribution shift is tiny, e.g., the total variation distance between $(X_1',\ X_2', \cdots)$ and the initial data $D_{\mathrm{orig}}$ is very small, there are fundamental lower bounds on how well a test can distinguish the two distributions~\cite{yu1997assouad}. In this paper, we devise a test that is guaranteed $\epsilon$-sound, and has low initial discovery time empirically.

\subsection{Proposed Method}

\textbf{Overview:} The key idea behind our method is that a predictor trained to distinguish between two samples, one of which is taken from $D_{\mathrm{new}}$ and the other of which is taken from $D_{\mathrm{orig}}$, can do no better than random chance if there has been no distribution shift.
That is, an indicator variable $Y_k$ that takes the value of 1 when the prediction model correctly predicts which sample originated from $D_{\mathrm{new}}$ and 0 otherwise is a Bernoulli random variable with parameter $p:= \mathrm{Pr}[Y_k = 1] = 0.5$ when no distribution shift has occurred. This is true no matter what the prediction model is, or how it was trained. 
We concretize this notion in Lemma~\ref{lemma:bernoulli} below. 

More formally, let $\Xc$ denote the sample space of both $D_{\mathrm{orig}}$ and $D_{\mathrm{new}}$, and let $X_i \in \Xc$ and $X_j' \in \Xc$ represent samples from $D_{\mathrm{orig}}$ and $D_{\mathrm{new}}$ respectively. We consider a trained neural network model $f:\Xc^2 \to \{0,1\}$ that takes as input a set of unordered, unlabeled samples $\{X_i,\ X_j'\}$, and predicts which of the two input samples is the more recent one (i.e. which is from $D_{\mathrm{new}}$). Note that $f$ is a binary classifier. At each time step $k$, we can then define an indicator variable $Y_k$ as follows:
\begin{equation}
\label{eq:indicator}
    Y_k = 
    \begin{cases}
        1\hspace{0.5cm} \text{if } f \text{ predicts correctly} \\
        0\hspace{0.5cm} \text{otherwise}.
    \end{cases}
\end{equation}
$Y_k$ is a Bernoulli random variable with $p = 0.5$ if no distribution shift has occurred (see Lemma~\ref{lemma:bernoulli}). 

\vspace{-2mm}
\begin{lemma}
\label{lemma:bernoulli}
Let $(X_1, X_2, \dots)$ be a sequence of data points with $X_i \in \Xc$ and let $f: \Xc^2 \to \{0,1\}$ be a model that predicts which input in an unordered pair of data points $\{X, X'\}$ was more recent. Define the indicator random variable $Y \in \{0,1\}$ as in Equation \eqref{eq:indicator}, so that $Y = 1$ whenever $f$ correctly predicts which input from $X$ and $X'$ was more recent. If the sequence $\{X_1, X_2, \dots\}$ is exchangeable, then it holds that $\mathrm{Pr}(Y = 1) = \frac{1}{2}$, regardless of the choice of classifier $f$.
\end{lemma}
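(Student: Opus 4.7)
The plan is to exploit the symmetry that exchangeability imposes on unordered pairs: the classifier $f$ sees only the set $\{X, X'\}$, so any time-label it assigns must be matched, in distribution, by the swapped labeling, forcing $f$'s accuracy to $\tfrac12$ regardless of how sophisticated $f$ is.

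First I would recast $f$'s prediction as a function $g : \Xc \times \Xc \to \{0,1\}$ with $g(x, x') = 1$ if $f$ identifies the second argument as more recent and $g(x, x') = 0$ if it identifies the first. Because $f$ must choose exactly one of the two elements of the pair, $g$ satisfies the pointwise identity $g(x, x') + g(x', x) = 1$ whenever $x \neq x'$. Taking the pair on which $f$ is evaluated to be $(X_i, X_j)$ with $i < j$, the sample $X_j$ is truly the more recent one, and the indicator in \eqref{eq:indicator} becomes $Y = g(X_i, X_j)$.

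Next I would apply exchangeability. By Definition~\ref{def:exchangeability}, $(X_i, X_j)$ and $(X_j, X_i)$ have the same joint distribution, so $\E[g(X_i, X_j)] = \E[g(X_j, X_i)]$. Combining this with the pointwise identity yields $\E[g(X_i, X_j)] = \E[1 - g(X_i, X_j)]$, from which $\Pr(Y = 1) = \tfrac12$, regardless of the training procedure or architecture of $f$.

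The one real subtlety, and the main obstacle to writing a fully rigorous version, lies on the diagonal $\{X_i = X_j\}$, where the identity $g(x, x') + g(x', x) = 1$ can fail. I would resolve this either by assuming the data distribution is diffuse enough that $\Pr(X_i = X_j) = 0$, or by stipulating that $f$ breaks such ties with an independent fair coin flip; either convention preserves the exchangeability symmetry argument and yields the stated equality.
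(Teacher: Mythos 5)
Your proof is correct and uses essentially the same symmetry argument as the paper: since $f$ only sees the unordered pair, exchangeability makes the two time-orderings equally likely, forcing accuracy $\frac{1}{2}$. The paper phrases this by conditioning on $\{X,X'\}$ and applying total probability over which element came first (then the tower rule), while you phrase it via the swap identity $g(x,x')+g(x',x)=1$ together with the distributional equality of $(X_i,X_j)$ and $(X_j,X_i)$; your explicit handling of the tie case $X_i=X_j$ addresses a degeneracy that the paper's proof silently passes over.
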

\vspace{-1mm}
\begin{proof}
    See Appendix~\ref{appendix:proofs}.
\end{proof}
\vspace{1mm}

We can then use these $Y_k$ values to define a stochastic process $M_n$, which is a martingale under the hypothesis that there is no distribution shift --- in other words, if $Y_k$ is indeed a Bernoulli random variable with $p = 0.5$, then $M_n$ is a martingale. We update the value of $M_n$ after each new episode/prediction, and by Doob's Inequality, the probability that $M_n$ grows large is very small. 
If $M_n$ does grow large, then the assumption that the $Y_k$'s are Bernoulli random variables with $p = 0.5$ has most likely been violated, indicating that the samples from $D_{\mathrm{orig}}$ and $D_{\mathrm{new}}$ are not exchangeable and that a distribution shift has occurred with high probability. 

We can select a threshold $C$ such that if $M_n \geq C$, our method will issue an alert that the distribution has shifted. Doob's Inequality guarantees a false positive rate inversely proportional to $C$ as the probability that $M_n \geq C$ when there has been no distribution shift is upper bounded by $E[\max(M_N,0)]/C$.

\textbf{Choice of Martingale:}
In theory, any martingale $M_n$ constructed from Bernoulli ($p = 0.5$) random variables and computed with the $Y_k$ values defined in Equation~\ref{eq:indicator} would allow us to detect distribution shift, in the sense that $M_n$ would eventually grow large if the $Y_k$'s are not Bernoulli ($p = 0.5$). However, one desirable property for $M_n$ is that it should grow quickly if there has been a distribution shift (i.e. if the $Y_k$'s are not Bernoulli with $p = 0.5$).
Thus, we use an exponential martingale defined as follows:
\vspace{-1mm}
\begin{equation}
\label{eq:martingale}
    M_n = (e^{t \cdot S_n})/((q + p e^t)^n),
\end{equation}
where $S_n = \sum_{i=1}^{n} Y_k$, $p = q = 0.5$, and we use $t = 1$.
We prove in Lemma~\ref{lemma:martingale} that $M_n$ is indeed a martingale.

\vspace{-2mm}
\begin{lemma}
\label{lemma:martingale}
    Let $(Y_1, Y_2, \dots)$ be a sequence of exchangeable and identically distributed Bernoulli random variables with $\mathrm{Pr}\big[Y_i = 1\big] = p$, and define $S_n := \sum_{i=1}^n Y_i$. Then the stochastic process $\{M_n\}_{n=1}^\infty$, with $M_n = (e^{t \cdot S_n})/(((1 - p)+ p e^t)^n)$, is a martingale.
\end{lemma}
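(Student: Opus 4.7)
The plan is to verify the two defining conditions in Definition~\ref{def:martingale} directly: integrability, and the conditional expectation identity. I would work with respect to the natural filtration $\Fc_n := \sigma(Y_1, \dots, Y_n)$; since $M_n$ is a deterministic function of $(Y_1, \dots, Y_n)$, proving $\E[M_{n+1} \mid \Fc_n] = M_n$ and then invoking the tower property yields the stated identity $\E[M_{n+1} \mid M_1, \dots, M_n] = M_n$.

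Integrability is immediate: because each $Y_i \in \{0,1\}$, we have $0 \le S_n \le n$, so there is a deterministic bound
\[
0 \;\le\; M_n \;\le\; \left( \frac{e^t}{(1-p) + pe^t} \right)^{\!n} \;<\; \infty,
\]
hence $\E[|M_n|] < \infty$. For the martingale identity, I would exploit the telescoping form
\[
M_{n+1} \;=\; M_n \cdot \frac{e^{t Y_{n+1}}}{(1-p) + p e^t}.
\]
Because $M_n$ is $\Fc_n$-measurable, it factors out of the conditional expectation:
\[
\E[M_{n+1} \mid \Fc_n] \;=\; \frac{M_n}{(1-p) + p e^t} \cdot \E\bigl[ e^{t Y_{n+1}} \bigm| \Fc_n \bigr].
\]
The crux is then the moment generating function evaluation $\E[e^{t Y_{n+1}} \mid \Fc_n] = (1-p) + p e^t$; substituting this collapses the normalizing denominator and leaves exactly $M_n$.

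The main (albeit mild) obstacle is justifying that MGF step. The unconditional MGF of a Bernoulli($p$) variable is the elementary computation $p e^t + (1-p) \cdot e^0 = (1-p) + p e^t$, but equating this with the \emph{conditional} MGF given $\Fc_n$ requires that $Y_{n+1}$ be independent of $(Y_1, \dots, Y_n)$. Exchangeability alone does not imply independence in general, so this is the one place where the hypothesis must be read carefully. In the paper's setting the $Y_k$'s arise from applying $f$ to fresh, independently drawn pairs from $D_{\mathrm{orig}} \cup D_{\mathrm{new}}$, so they are genuinely i.i.d.\ Bernoulli($p$) and the factoring holds; I expect the appendix proof to either assume independence outright or to compute the conditional MGF directly from the (trivial) conditional distribution of $Y_{n+1}$ under the exchangeability null. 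With that step in hand, the chain $\E[M_{n+1}\mid \Fc_n] = M_n$ follows, and the tower property completes the proof.
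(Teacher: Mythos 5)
Your proof is correct as far as it goes, and it follows essentially the same telescoping route as the paper: write $M_{n+1} = M_n \cdot e^{tY_{n+1}}/((1-p)+pe^t)$, pull out the $\Fc_n$-measurable factor $M_n$, and reduce everything to the MGF identity $\E[e^{tY_{n+1}} \mid \Fc_n] = (1-p)+pe^t$. The difference lies exactly where you flagged it. You observe that this conditional MGF step needs $Y_{n+1}$ to be independent of $(Y_1,\dots,Y_n)$, that exchangeability alone does not supply this, and you close the gap by appealing to the fact that in the paper's construction the $Y_k$ are genuinely i.i.d. The paper instead tries to prove the lemma for general exchangeable sequences via De Finetti's theorem: it introduces a latent $Z$ making the $Y_i$ conditionally i.i.d., conditions on $Z$, and then integrates $Z$ out with an unconditional ``tower'' step. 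That last step is where your caution is vindicated: the outer expectation over $Z$ should be taken with respect to the posterior of $Z$ given $M_1,\dots,M_n$, not its prior, and these differ precisely when the sequence is exchangeable but not independent. Indeed, the lemma as literally stated fails for general exchangeable Bernoulli sequences --- e.g.\ for a P\'olya-urn-type mixture with $Z$ uniform on $\{0.1,0.9\}$ and marginal $p=0.5$, one has $\E[e^{tY_2}\mid Y_1=1] \neq (1-p)+pe^t$, so already $\E[M_2\mid M_1]\neq M_1$. Your instinct to demand independence is therefore not a stylistic choice but the mathematically necessary hypothesis; your version of the argument (with the i.i.d.\ assumption made explicit, which does hold in the intended application since each $Y_k$ arises from an independently drawn pair) is the sound one, and it additionally supplies the integrability check that the paper's proof omits.
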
 
\begin{proof}
    See Appendix~\ref{appendix:proofs}.
\end{proof}
\vspace{1mm}

Since $M_0 = 1$ and the martingale is non-negative, Doob's Inequality simplifies in this case to 
\vspace{-1mm}
\begin{equation*}
    \mathrm{Pr}\left[\sup_{0 \leq n \leq N} M_n \geq C\right] \leq \frac{1}{C},
\end{equation*}
by the law of total expectation (since $E[M_n] = E[E[M_n | M_{n-1}]] = E[M_{n-1}] \cdots = E[M_0]$). For our experiments, we use a threshold of $C = 100$, which guarantees a false positive rate of $\le 0.01$. That is, the threshold of $C = 100$ guarantees that we raise a false alarm with probability at most $1\%$.  

\textbf{Training Procedure:}
We now describe the procedure that we use to train $f$, which is used to compute the test functions $\psi_j$.
At train time, we observe a sequence of data points $D_{\mathrm{orig}}$, and divide these into three non-overlapping sets: (1) a randomly sampled held back set of ``unseen'' data points, which will not be used until test time, (2) a set of ``older'' data points from earlier in the sequence, and (3) a set of ``more recent'' data points from later in the sequence.
We then take pairs of randomly selected samples, one from the set of ``older'' data points and one from the set of ``more recent'' data points, and train a neural network to distinguish between the two. The input to this neural network model is a pair of randomly selected, shuffled samples, and the output is either 0 or 1, depending on which sample the model predicts to be from the set of ``more recent'' data points. This method is self-supervised --- it depends only on the ordering of the two samples (before shuffling), which can be labeled automatically.

At test time, we observe a sequence of data points $D_{\mathrm{new}}$. Each incoming data point $X_j'$ is paired with a randomly selected data point $X_i$ from the held back set of unseen data points from $D_{\mathrm{orig}}$. This pair of samples is then input into the trained model $f$, which makes a prediction; the output is used to update $Y_j$ (Equation~\ref{eq:indicator}) and $M_j$ (Equation~\ref{eq:martingale}). The test function $\psi_j := \mathbbm{1}\{M_j > C\}$, which issues an alert the first time the martingale is greater than $C$, will have a guaranteed false positive rate of $1/C$.

After a prediction is made for a data point $X_j'$ in $D_{\mathrm{new}}$ and $M_j$ is updated, $X_j'$ is added to the set of more recent data points from $D_{\mathrm{orig}}$. Then, the entire process (taking pairs of randomly selected samples, training $f$, making a prediction, and updating $M_j$) is repeated for $X_{j+1}'$. This retraining is necessary for detecting distribution shifts that occur during test time and have never been previously encountered during training.
Notably, the models that we use are small and easy to train. Additionally, we take a continual/incremental learning approach and constantly fine-tune our model when new data arrives, rather than retraining from scratch with each data point, further reducing the computational cost.

\subsection{Discussion} 

By design, our proposed method incorporates a number of strengths: (1) guaranteed
$\epsilon$-soundness, (2) self-supervised training, (3) low initial discovery time, and (4) the ability to accommodate high-dimensional input data (e.g., images) as the classifier $f$ may be a deep neural network with arbitrary architecture. By combining a deep learning approach with a statistical martingale approach, this method can be deployed online for episodic robotics settings.

Two limitations of our method are that (1) it applies only to episodic settings, and (2) to inform decision making, it is useful mostly for distributions that change gradually (relative to the number of deployed robots). The episodic setting is necessary to satisfy the exchangeability assumption for samples; data points from within the same episode or trajectory would potentially be highly correlated, and thus multiple unusual (i.e., unlikely) samples might not provide any more evidence for distribution shift than one unusual sample. To inform decision making, our proposed monitor has practical utility as a warning system if there is a time period when a distribution shift has occurred but catastrophe may yet be averted. 
Nonetheless, it is worth noting that even with a rapid environmental distribution shift, our method has the potential to prevent failure if a sufficient number of robots are deployed and collecting data simultaneously (since each robot can be considered a different episode), provided that the environmental change does not result in immediate failure.
Moreover, many distribution shifts may impact performance without causing a catastrophic failure (for example, a shift may only impact the ride quality for passengers of an autonomous aircraft by reducing tracking performance); it is still worthwhile to detect such shifts. Since our method is self-supervised, we can detect these shifts even when human operators cannot assess performance post-facto for each episode (like in a large fleet of deployed autonomous aircraft). 

Both of these limitations are ultimately properties of the problem setup of detecting distribution shift, and are not specific to our proposed methodology; understanding how to combine this warning system with other strategies for preserving the safety of learning-enabled systems (e.g., anomaly detection, data lifecycle analysis) represent promising avenues for future research \cite{SinhaSharmaEtAl2022}.

\section{Experiments}

We compare our method against two baselines. The first is the method described by Vovk et. al. in~\cite{Vovk2021RetrainON}, which we will refer to as the conformal martingale (CM) method. For this method, we use the nearest distance nonconformity score as recommended in~\cite{Vovk2021RetrainON} when no labels are available. For a second baseline, CM-FV, we slightly modify the CM method to use learned features from a pre-trained neural network; here, we use a nearest distance nonconformity score on lower dimensional feature vectors extracted from a pre-trained neural network model. In the next two subsections, we present simulator and hardware experiments for variants of visual servoing tasks; we also provide results on standard benchmarks from the distribution shift detection literature (CIFAR-100~\cite{CIFAR-100}, CIFAR-10~\cite{CIFAR-10}, and the Wine Quality dataset~\cite{WineQuality}) in Appendix~\ref{sec:experiments_synthetic}. For all methods, an alert is issued when the martingales reach a threshold of 100, in order to guarantee a false positive rate of $\leq 0.01$ (as explained in Section~\ref{sec:method}). 
For more experiment details, additional ablations, and videos of our robots, see Appendices~\ref{ap:x-plane} and~\ref{ap:free-flyer}, and the supplementary video.

\begin{figure}[b]
    \centering
    \includegraphics[width=0.7\linewidth]{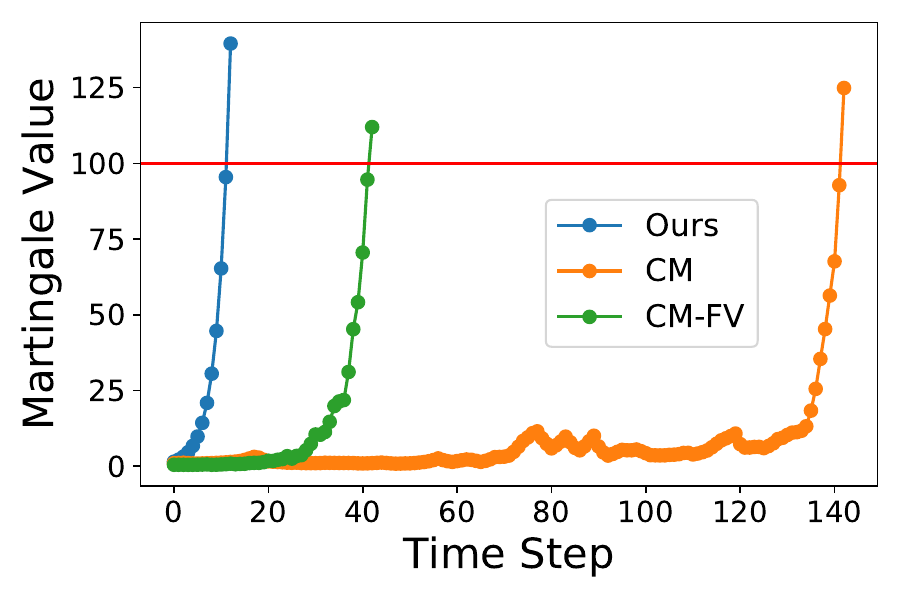}
    \caption{Martingale values for \textcolor{blue}{our method} (\textcolor{blue}{blue}), the \textcolor{orange}{CM method (\textcolor{orange}{orange})}, and the modified CM method \textcolor{green}{CM-FV} (\textcolor{green}{green}). The distribution shifts gradually over the course of the day, and an alert is issued when the martingales reach 100. In this example, our method issues an alert at time step 13, CM-FV issues an alert at time step 43, and CM issues an alert at time step 143.}
    \label{fig:xplane_time}
\end{figure}

\begin{figure*}[!htb]
    \centering
    \begin{subfigure}[c]{0.325\textwidth}
        \centering
        \includegraphics[width=\textwidth]{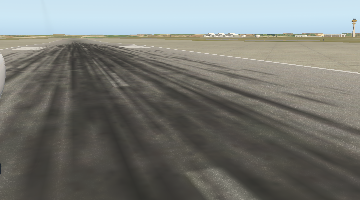}\vspace{1mm}
        \caption{Calibrated camera}
        \label{fig:camera_correct}
    \end{subfigure}
    \begin{subfigure}[c]{0.325\textwidth}
        \centering
        \includegraphics[width=\textwidth]{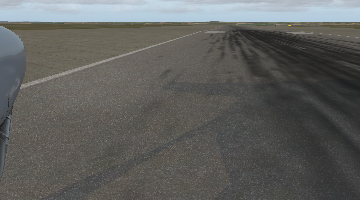}\vspace{1mm}
        \caption{Perturbed camera}
        \label{fig:camera_perturbed}
    \end{subfigure}
    \begin{subfigure}[c]{0.305\textwidth}
        \centering
        \includegraphics[width=\textwidth]{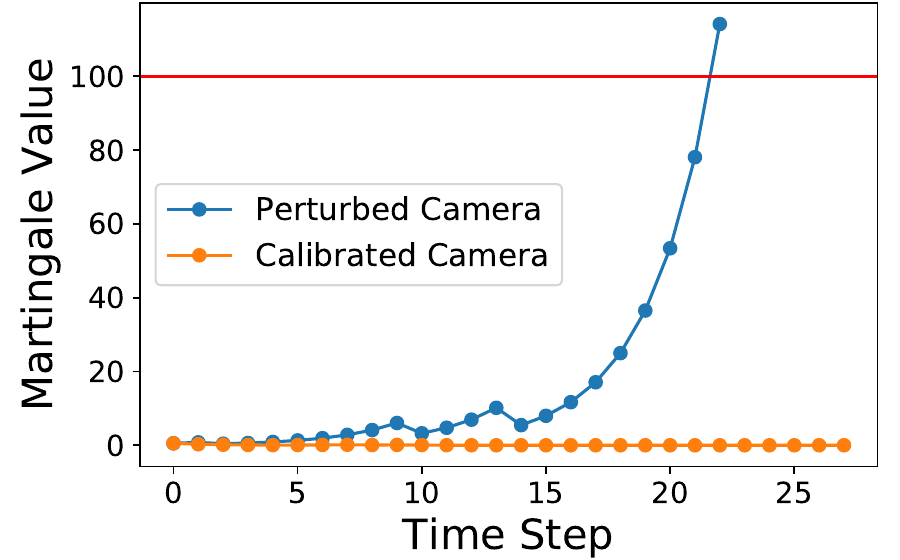}
        \caption{Martingale values}
        \label{fig:xplane_camera_results}
    \end{subfigure}
    \caption{Sample images generated with the X-Plane 11 flight simulator, with (\ref{fig:camera_correct}) a standard camera angle, and (\ref{fig:camera_perturbed}) a perturbed camera angle. (\ref{fig:xplane_camera_results}) shows martingale values for our method \textcolor{blue}{with} (\textcolor{blue}{blue}) and \textcolor{orange}{without} (\textcolor{orange}{orange}) distribution shift. With a distribution shift, the martingale grows rapidly, but without one, the martingale does not grow. We empirically observe both FNR = 0 and FPR = 0.}
    \label{fig:xplane_images_camera}
    \vspace{-3mm}
\end{figure*}

\subsection{X-Plane Simulator Experiments}
\label{sec:experiments_xplane}

We validate the performance of our method on image data from an autonomous aircraft that relies on an outboard camera in the photorealistic X-Plane 11 flight simulator. Each episode consists of the autonomous aircraft using a PID controller to taxi along the centerline of a runway. We pretrain a DNN to estimate the centerline distance from vision using only clear-sky, morning weather from the simulator ground truth. For monitor computation, each sample $X_i$ or $X_j'$ is one image sampled randomly from the episode. We test our method on two separate distribution shifts that are safety critical; we verify in Appendix \ref{ap:x-plane}~\cite{this-paper} that both shifts degrade the perception model and cause the autonomous aircraft to fail and run off the runway. 

We use our method to detect these realistic failure scenarios of learning-enabled robots and find that it significantly outperforms prior work, detecting gradual distribution shifts up to an order of magnitude faster than the baselines. We also show empirically that the guarantee on the false positive rate holds; i.e., when there is no distribution shift, our martingale does not grow.

\subsubsection{Gradual Daytime to Nighttime Shift} 
We first demonstrate that our method significantly outperforms both baselines and raises a warning before policy failure on the distribution shift of a simulated gradual daytime to nighttime lighting shift (see Figure~\ref{fig:system}).

\textbf{Dataset. } We use the X-Plane 11 flight simulator and NASA's XPlaneConnect Python API to create 1000 simulated video sequences taken from a camera attached to the outside of the plane as it taxis down the runway at different times throughout the day (with different weather conditions, starting positions, etc.)~\cite{xplane-simulator}. Each taxiing sequence consists of approximately 30 images of size 200x360x3. We randomly sample one image from each sequence. Fig.~\ref{fig:xplane_images} in Appendix~\ref{ap:x-plane}~\cite{this-paper} shows three example images.

\textbf{Experimental Setup. } We combine the morning and afternoon data points to form the training dataset. The evening data points are deployed in time order (i.e. the earliest evening images first) at test time. We train a basic neural network (with four convolutional layers followed by two linear layers) to predict which inputs are more recent, and run 100 trials of each experiment. (Refer to the Appendix for additional training details.) 

\textbf{Results. } Our method significantly outperforms both baselines. Our method issues an alert only 14.45 time steps into the evening data samples (on average over the 100 trials). With the CM method, the alert is issued after 161.18 time steps on average, and with the modified CM-FV method, the alert is issued after 37.44 time steps on average. Fig.~\ref{fig:xplane_time} shows an example plot of the growth of the martingale values for each method; an alert is issued after each martingale crosses the threshold of 100. The prompt alert from our method is particularly interesting because the early evening images (from just after 5:00pm) look visually very similar to those from earlier in the day. Notably, over 100 trials of the experiment, our method never fails to detect a distribution shift; i.e. we empirically observe no false negatives. The CM method fails to detect a distribution shift 34 times, and the CM-FV method fails to detect a distribution shift once. These numbers are summarized in Table~\ref{tab:xplane-day-night}.

These results indicate that our method performs well on realistic examples, and detects distribution shifts up to 11x more quickly than prior work. They also suggest that our method holds a larger efficiency advantage as the data increases in dimensionality (compare to results on synthetic datasets in the Appendix~\cite{this-paper}), and that both our end-to-end optimized methodology and our use of a learned model lead to a more rapid detection of distribution shifts.

Note that the computational cost of updating our model after each episode is very small, even on high-dimensional data. Each update in our experiments takes less than two seconds on a MacBook Pro M1 CPU.
Additional ablations with different hyperparameters and neural network architectures can be found in Appendix~\ref{ap:x-plane}~\cite{this-paper}; overall, the results do not vary significantly.

\begin{table} [tbp]
\vspace{2mm}
    \begin{center}
        \begin{tabular}{ l c c c}
            \toprule
            \multicolumn{4}{c}{\textbf{X-Plane, Day-to-Night Shift (100 Trials)}} \\
            & Ours & CM & CM-FV \\  
            \midrule 
            Mean Time Steps Until Alert	    & \textbf{14.45} & 161.18 & 37.44 \\
            False Negative Rate	        & \textbf{0} & 0.34 & 0.01 \\
            \bottomrule
        \end{tabular}
    \end{center}
    \caption{Results on the daytime-to-nighttime distribution shift with X-Plane data for our method, the CM method, and the CM-FV method. The first row summarizes the number of time steps before an alert is issued, and the second row summarizes the false negative rate for each method. All values are averaged over 100 trials; lower is better, best results shown in \textbf{bold}. Our method significantly outperforms the CM and CM-FV methods.}
    \label{tab:xplane-day-night}
\vspace{-3mm}
\end{table}

\begin{figure*}[!tb]
    \centering
    \begin{subfigure}[b]{0.235\textwidth}
        \centering
        \includegraphics[width=\textwidth]{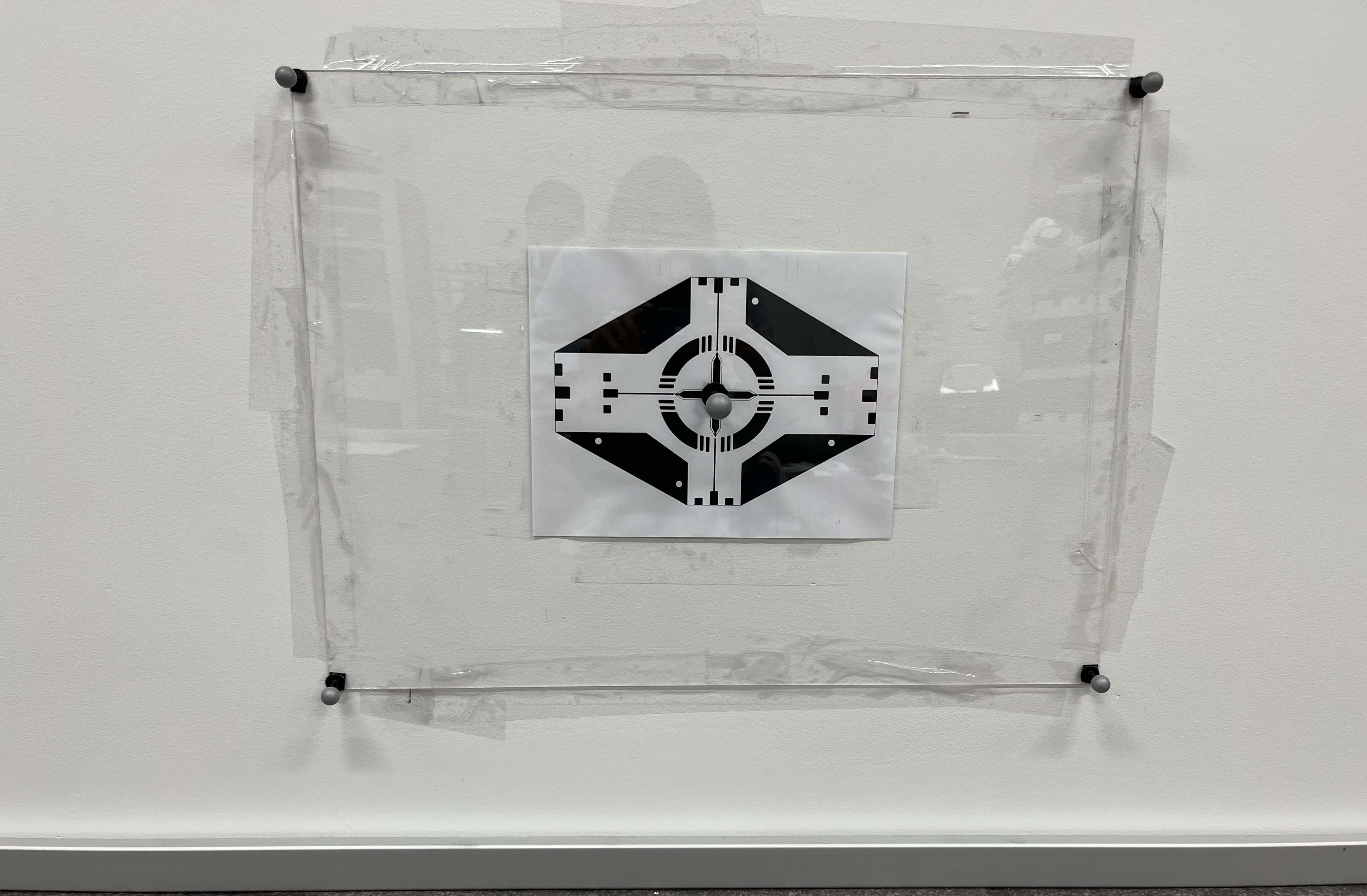}
        \caption{Initial condition}
        \label{fig:0episodes}
    \end{subfigure}
    \hfill
    \begin{subfigure}[b]{0.235\textwidth}
        \centering
        \includegraphics[width=\textwidth]{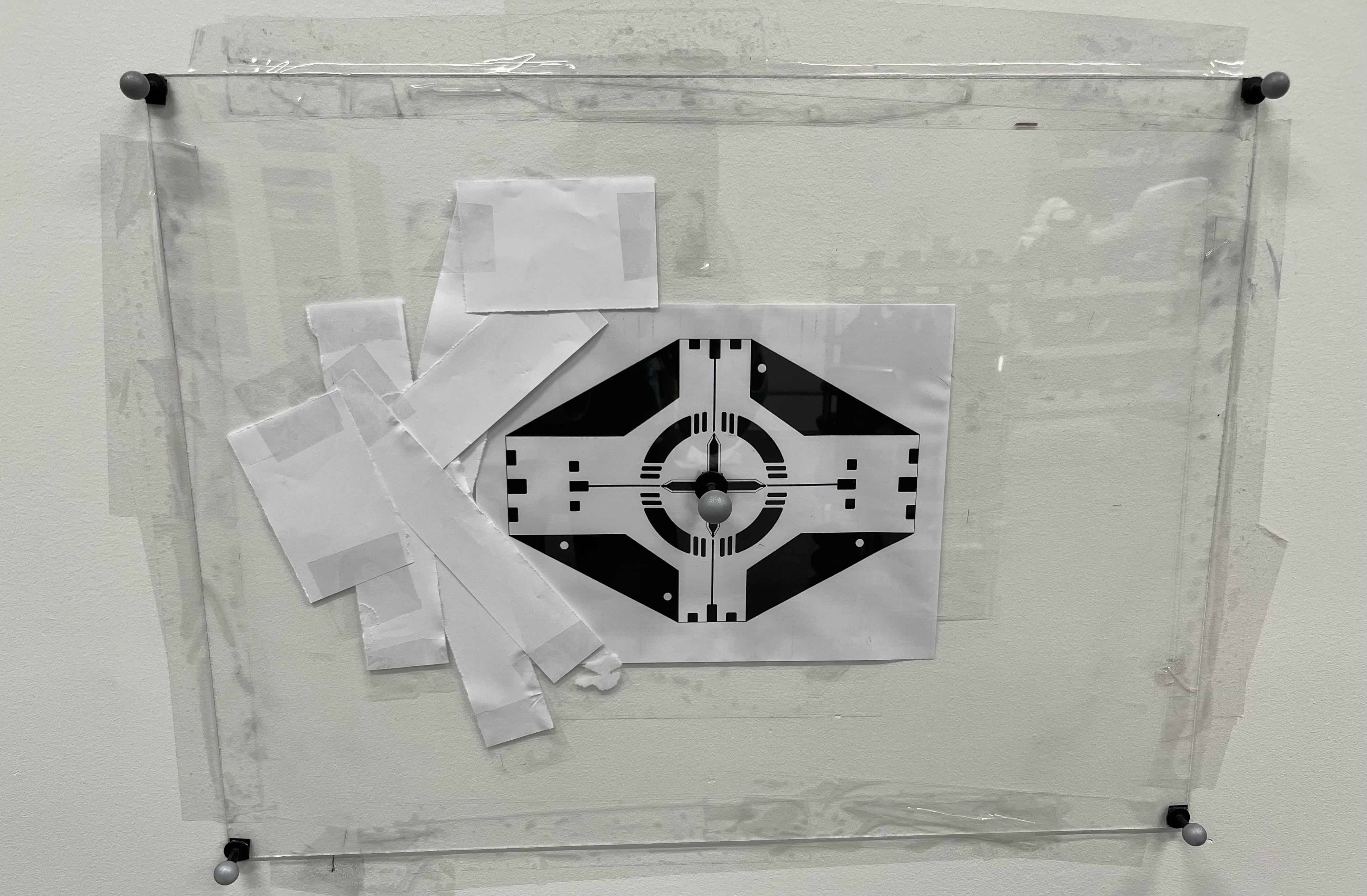}
        \caption{After 5 episodes}
        \label{fig:5episodes}
    \end{subfigure}
    \hfill
    \begin{subfigure}[b]{0.235\textwidth}
        \centering
        \includegraphics[width=\textwidth]{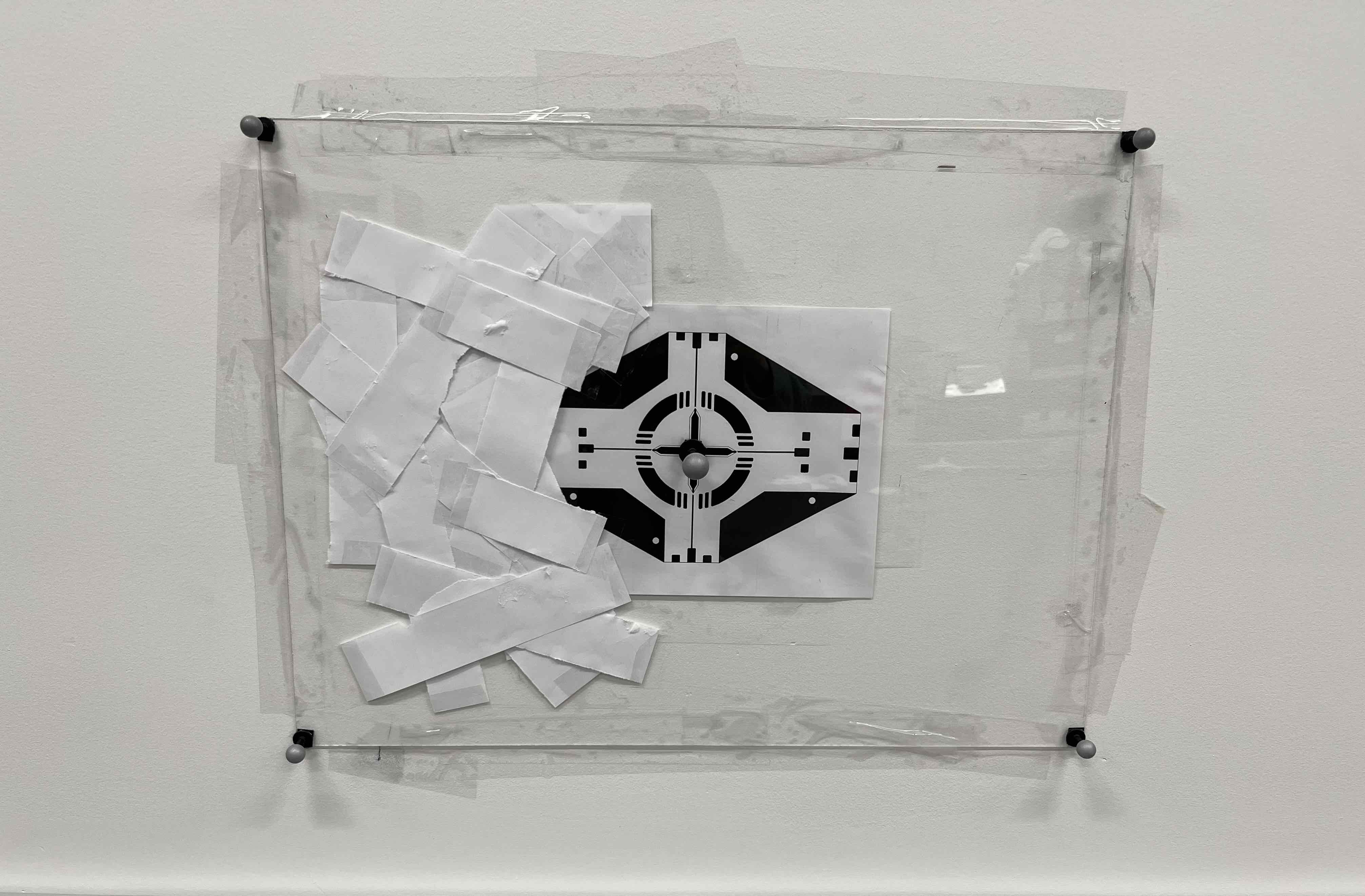}
        \caption{After 21 episodes}
        \label{fig:21episodes}
    \end{subfigure}
    \hfill
    \begin{subfigure}[b]{0.235\textwidth}
        \centering
        \includegraphics[width=\textwidth]{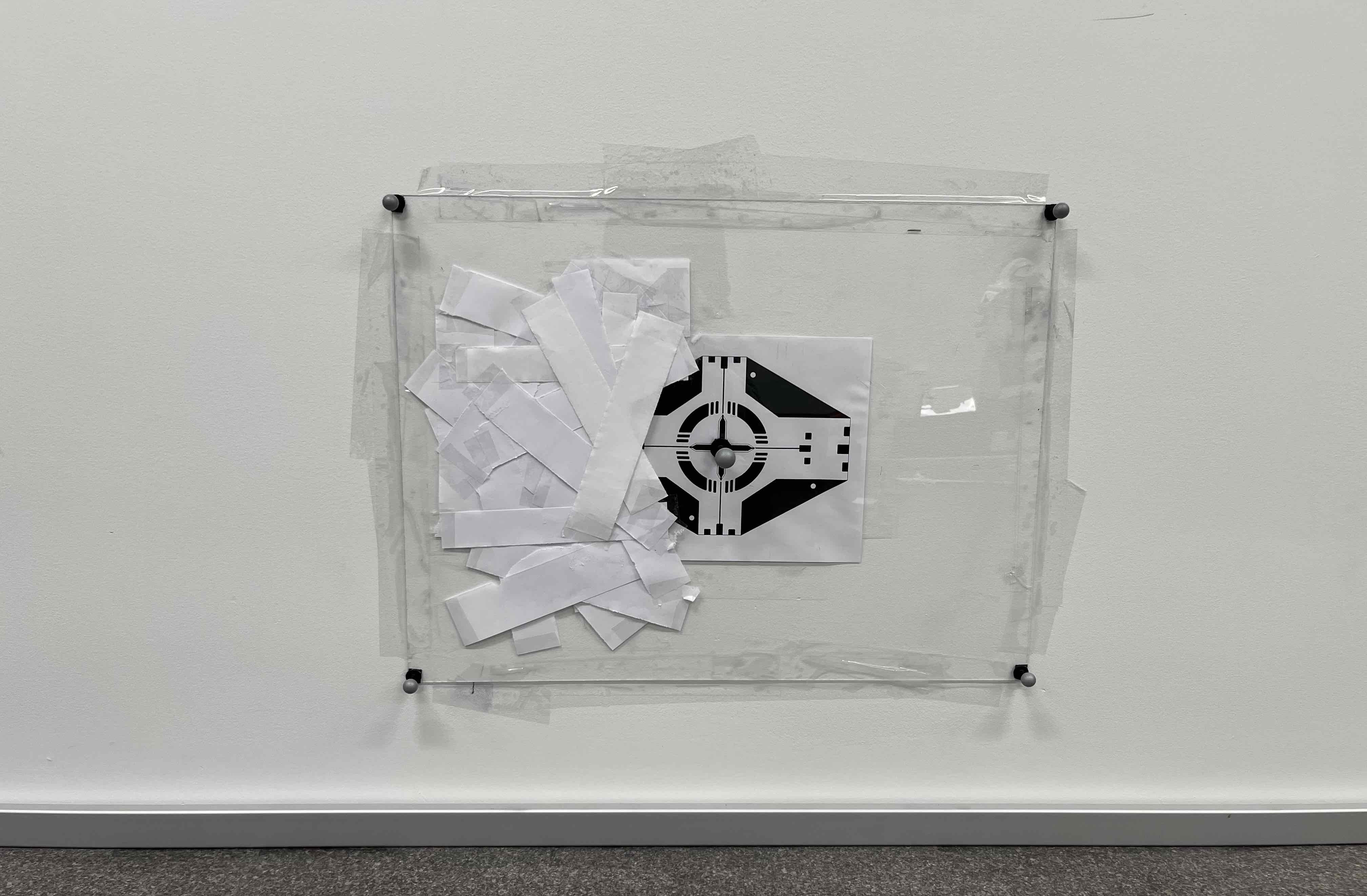}
        \caption{After 30 episodes}
        \label{fig:30episodes}
    \end{subfigure}
    \caption{Gradual degradation of the visual target, after (\ref{fig:0episodes}) 0 episodes, (\ref{fig:5episodes}) 5 episodes, (\ref{fig:21episodes}) 21 episodes, when our method issues an alert, and (\ref{fig:30episodes}) 30 episodes, when the robot fails to navigate to the target.  }
    \label{fig:freeflyer_degradation}
    \vspace{-1mm}
\end{figure*}

\subsubsection{Camera Angle Shift}
We consider a second set of simulations in which we compare the growth of our martingale with and without a distribution shift, specifically caused by a change in the camera angle (this could happen, for instance, if the camera was knocked slightly askew). Over the 100 trials of each scenario (distribution shift and no distribution shift), our method never fails to detect a distribution shift when there is indeed a change in camera angle (with an average detection time of 21.8 time steps), and never issues a false alert when there is no change in camera angle; i.e., we empirically observe no false negatives or false positives. 
Representative results for this experiment are shown in Fig.~\ref{fig:xplane_camera_results}, where ``Perturbed Camera'' is the distribution shift case and ``Calibrated Camera'' is the no distribution shift case. When there is no distribution shift, the martingale does not grow large; when there is a distribution shift, the martingale grows quickly.

\subsubsection{No Distribution Shift}
Finally, we demonstrate empirically that our false positive rate guarantee holds --- i.e., using a martingale threshold of $C = 100$, we have fewer than 1\% false alarms. In this set of experiments, there is no distribution shift between training and deployment. Over the 300 trials of this experiment with no shift, our method falsely issued an alert twice, emitting one alert at time step 57 and the other at time step 44. This represents a false positive rate of 0.0067, which is within our long-term theoretical limit of 0.01. A false positive can occur if the prediction model correctly predicts the more recent sample several times in a row by random chance. 
Because the false positive rate is very low, any alerts that are issued should be heeded. 
Similarly, the CM method issued a false alert twice (a false positive rate of 0.0067), and the CM-FV method issued a false alert once (a false positive rate of 0.0033). These results are unsurprising, since all three methods have a guarantee limiting the false positive rate over the long run. Our results are summarized in Table~\ref{tab:xplane-fpr}.

\begin{table} [htbp]
    \begin{center}
        \begin{tabular}{ l c c c}
            \toprule
            \multicolumn{4}{c}{\textbf{X-Plane, No Distribution Shift (300 Trials)}} \\
            & Ours & CM & CM-FV \\  
            \midrule 
            False Positive Rate	        & 0.0067 & 0.0067 & 0.0033 \\
            \bottomrule
        \end{tabular}
    \end{center}
    \caption{Empirical false positive rates on the X-Plane taxi scenario when there is no distribution shift. }
    \label{tab:xplane-fpr}
\end{table}

\subsection{Free-Flyer Hardware Experiments}
\label{sec:experiments_freeflyer}

We also perform hardware experiments using a free-flyer space robotics testbed with input from a forward-facing Intel Realsense D455 camera mounted on the side (see Fig.~\ref{fig:freeflyer_setup}). The free-flyer is a cold gas thruster-actuated 2D mobile robot that floats almost frictionlessly on a smooth granite table, developed to simulate zero-g or zero-friction conditions in aerospace robotics applications. In these experiments, we perform a learning-based visual servoing task that emulates autonomous spacecraft docking to demonstrate the efficacy of our distribution shift detection method. We use the visual pattern defined by the International Docking System Standard for spacecraft docking adapters as our main visual target~\cite{idss-standard}, as shown in Fig.~\ref{fig:freeflyer_target}. 

\vspace{1mm}
\begin{figure}
    \centering
    \begin{subfigure}[b]{0.23\textwidth}
        \centering
        \includegraphics[width=\textwidth]{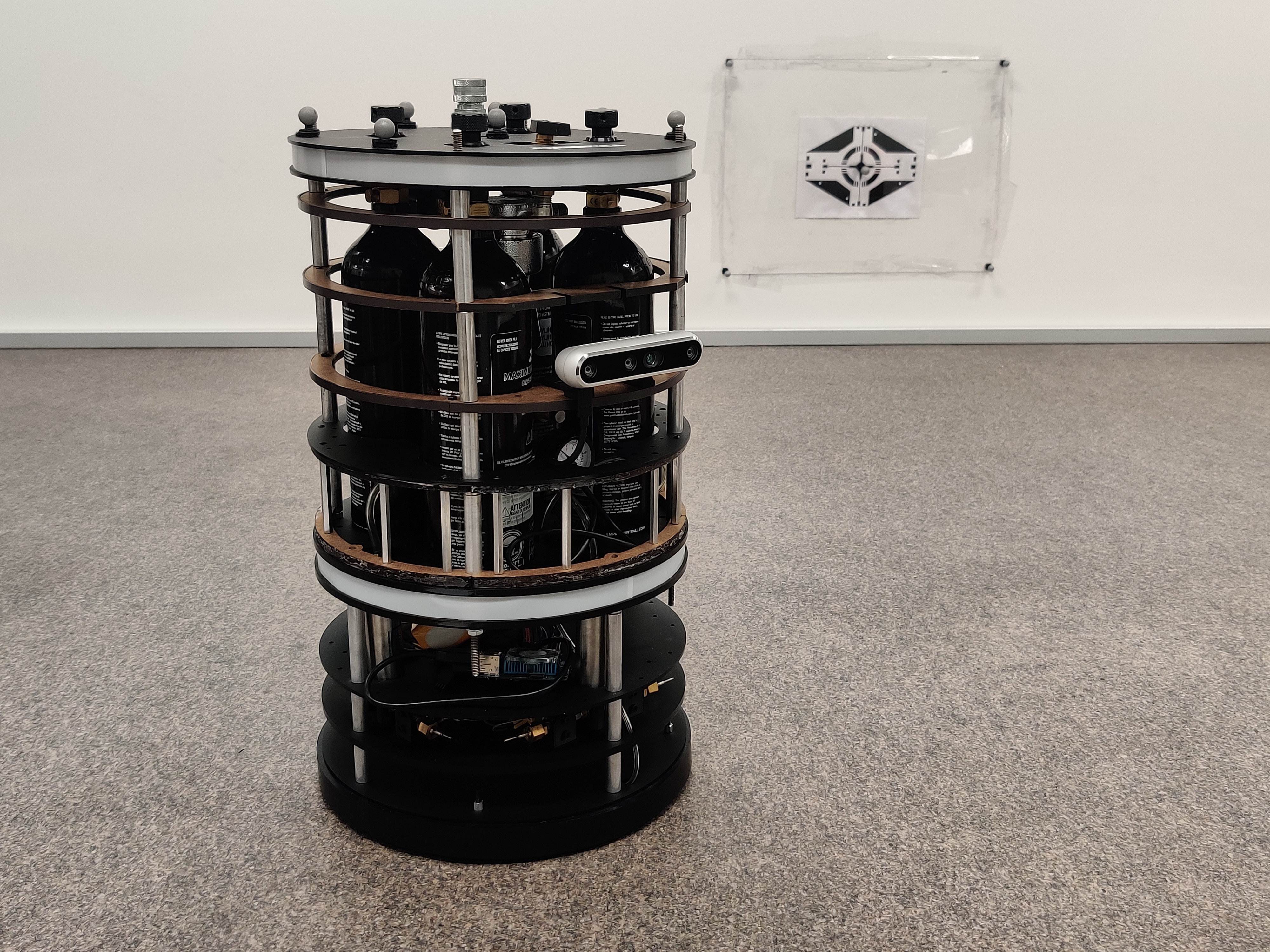}
        \caption{Free-flyer robot platform}
        \label{fig:freeflyer_setup}
    \end{subfigure}
    \hspace{0.5em}
    \begin{subfigure}[b]{0.23\textwidth}
        \centering
        \includegraphics[width=\textwidth]{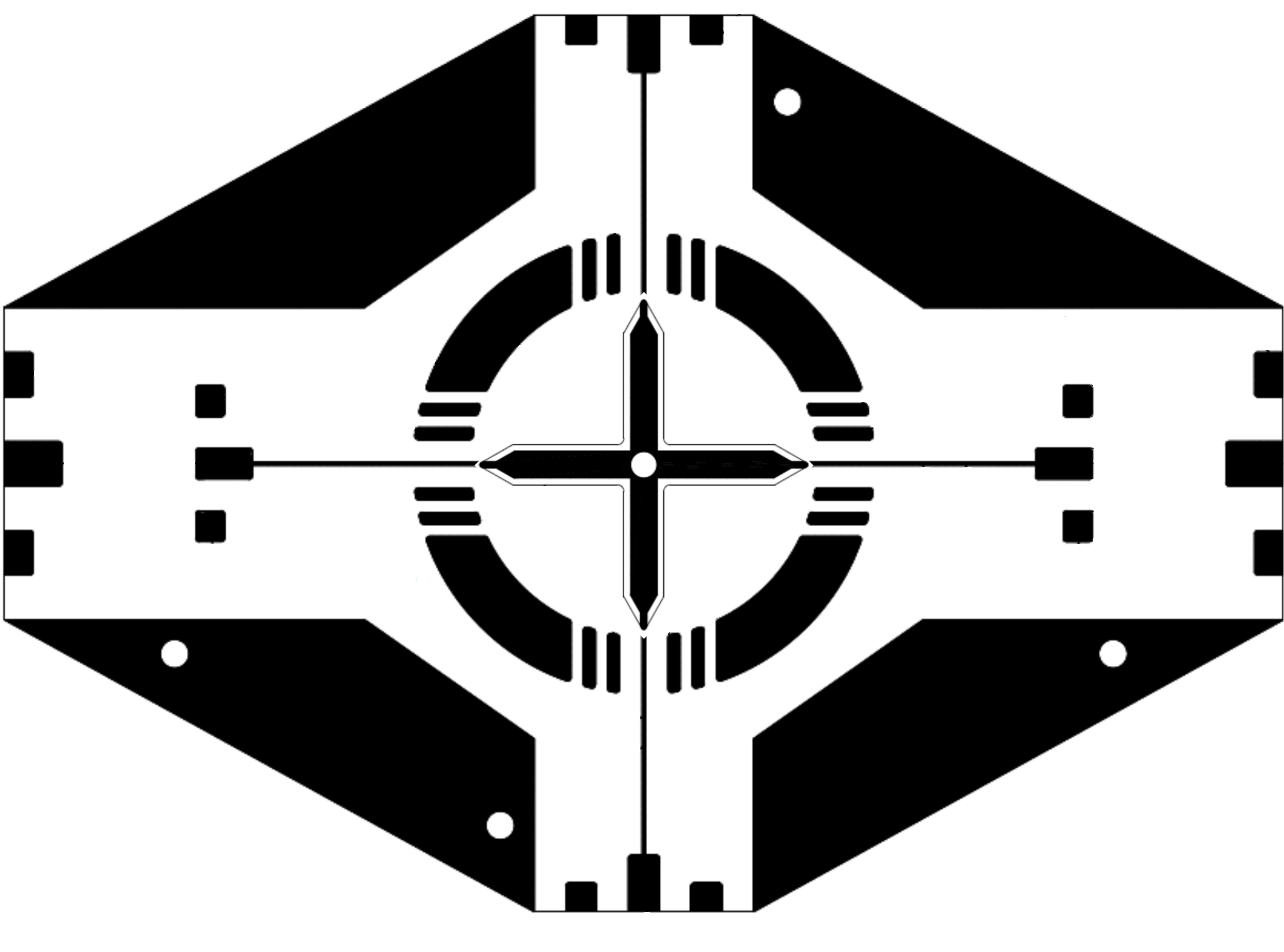}
        \caption{Visual servoing target}
        \label{fig:freeflyer_target}
    \end{subfigure}
    \caption{(\ref{fig:freeflyer_setup}) Hardware setup with camera mounted on the side of the mobile robot. (\ref{fig:freeflyer_target}) The visual servoing target that the robot navigates to. }
    \label{fig:freeflyer_platform}
\vspace{-3mm}
\end{figure}

\textbf{Experimental Setup:} First, we collect 10000 initial images (of size 360x640x3) and their associated ground truth relative pose with respect to the docking target using a motion capture system. We use this data to train a 6-layer CNN that directly predicts those offsets. During deployment, we pass the perception model outputs as measurements into a Kalman filter. We use the resulting visual state estimates to perform the visual servoing task of navigating the robot towards a docking position 50cm in front of the target with a simple PD controller. Each episode consists of a robot trajectory navigating from its initial position to the target. We consider an episode successful if the robot reaches a $\pm 10$ cm box centered around the goal position within 20 seconds. For the first 30 episodes (during training), 
no corruptions are added. Then, during deployment, we gradually add corruptions to the visual target until they cause the robot to fail. Specifically, after each episode, we add one additional white strip over the target (see Fig.~\ref{fig:freeflyer_degradation} for examples). This setup imitates repetitive autonomous spacecraft docking at the space station, where each additional docking could chip some paint off the docking target, slowly changing the target's visual appearance over time and leading to eventual spacecraft failure.

\textbf{Results: } Our method issues an alert at episode 21. The CM and CM-FV methods, in contrast, do not issue an alert before failure occurs at episode 30. Fig.~\ref{fig:freeflyer_degradation} shows the level of corruption at episode 21, when our method issues an alert, and at episode 30, when the robot fails to complete its task and Fig.~\ref{fig:freeflyer_martingale} plots the martingale values for each method. These results show that our method more rapidly warns us of distributional drift, allowing us to catch a realistic failure mode well \textit{before} failure occurs in a safety critical context.

\begin{figure}
\vspace{-2mm}
    \centering
    \includegraphics[width=0.7\linewidth]{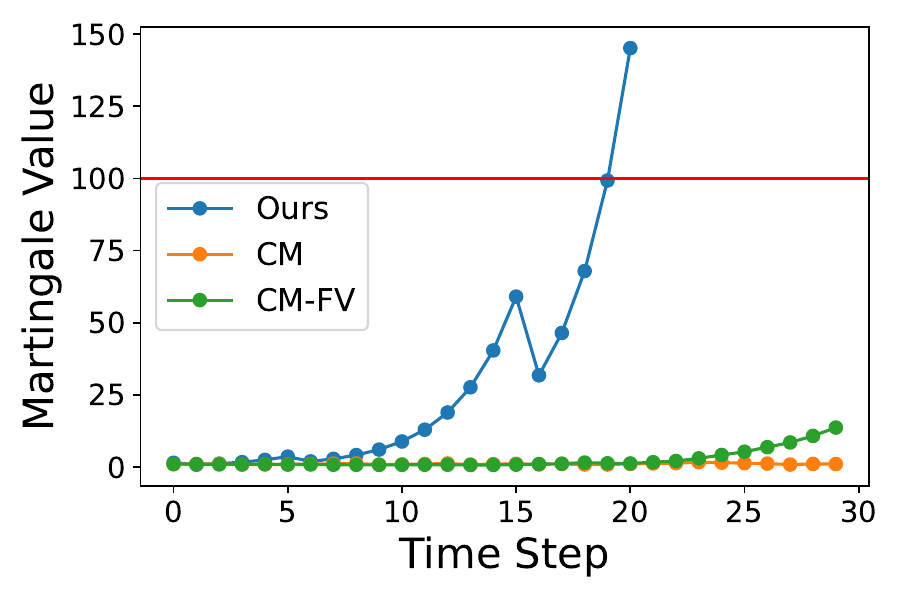}
    \caption{Martingale values for \textcolor{blue}{our method} (\textcolor{blue}{blue}), the \textcolor{orange}{CM method (\textcolor{orange}{orange})}, and the \textcolor{green}{CM-FV method} (\textcolor{green}{green}). Our method issues an alert at time step 21, \textit{before} the free-flyer fails to navigate to the target at time step 30.}
    \label{fig:freeflyer_martingale}
\vspace{-3mm}
\end{figure}

\section{Conclusion} 
\label{sec:conclusion}

In this work, we introduce a method for detecting distribution shifts on high-dimensional data in a streaming fashion. Our method is practical for robotics applications, and we demonstrate empirically that it performs well on photorealistic simulations of a plane taxiing down a runway -- detecting distribution shifts up to 11x more quickly than prior work -- as well as on a free-flyer hardware platform. By design, our method incorporates a number of strengths: (1) guaranteed
$\epsilon$-soundness, (2) self-supervised training, (3) low initial discovery time, and (4) the ability to accommodate high-dimensional input data (e.g., images). 
In future work, we would like to explore methods for detecting distribution shifts when the data is correlated~\cite{Barber2022ConformalPB}, as well as methods for combining this warning system with other strategies to preserve the safety of learning-enabled systems~\cite{SinhaSharmaEtAl2022}.






{
\bibliographystyle{ieeetr}
\bibliography{references}

\begin{thebibliography}{10}

\bibitem{CPforML}
V.~N. Balasubramanian, S.-S. Ho, and V.~Vovk, {\em Conformal Prediction for Reliable Machine Learning}.
\newblock Morgan Kaufmann Publishers Inc., 2014.

\bibitem{Xu2021ConformalAD}
C.~Xu and Y.~Xie, ``Conformal anomaly detection on spatio-temporal observations with missing data,'' in {\em Workshop on Distribution-free Uncertainty Quantification at ICML}, 2021.

\bibitem{Kaur2023CODiT}
R.~Kaur, K.~Sridhar, S.~Park, S.~Jha, A.~Roy, O.~Sokolsky, and I.~Lee, ``Codit: Conformal out-of-distribution detection in time-series data for cyber-physical systems,'' in {\em International Conference on Cyber-Physical Systems}, 2023.

\bibitem{SinhaSharmaEtAl2022}
R.~Sinha, A.~Sharma, S.~Banerjee, T.~Lew, R.~Luo, S.~M. Richards, Y.~Sun, E.~Schmerling, and M.~Pavone, ``A system-level view on out-of-distribution data in robotics,'' {\em arXiv preprint arXiv:2212.14020}, 2022.
\newblock {Available at }\url{https://arxiv.org/abs/2212.14020}.

\bibitem{hendrycks2019Benchmarking}
D.~Hendrycks and T.~G. Dietterich, ``Benchmarking neural network robustness to common corruptions and perturbations,'' {\em ICLR}, 2019.

\bibitem{GeirhosJacobsenEtAl2020}
R.~Geirhos, J.-H. Jacobsen, C.~Michaelis, R.~Zemel, W.~Brendel, M.~Bethge, and F.~A. Wichmann, ``Shortcut learning in deep neural networks,'' {\em Nature Machine Intelligence}, vol.~2, pp.~665--673, Nov 2020.

\bibitem{KohSagawaEtAl2021}
P.~W. Koh, S.~Sagawa, H.~Marklund, S.~M. Xie, M.~Zhang, A.~Balsubramani, W.~Hu, M.~Yasunaga, R.~L. Phillips, I.~Gao, T.~Lee, E.~David, I.~Stavness, W.~Guo, B.~Earnshaw, I.~Haque, S.~M. Beery, J.~Leskovec, A.~Kundaje, E.~Pierson, S.~Levine, C.~Finn, and P.~Liang, ``Wilds: A benchmark of in-the-wild distribution shifts,'' in {\em Proceedings of the 38th International Conference on Machine Learning} (M.~Meila and T.~Zhang, eds.), vol.~139 of {\em Proceedings of Machine Learning Research}, pp.~5637--5664, PMLR, 18--24 Jul 2021.

\bibitem{MillerTaoriEtAl2021}
J.~Miller, R.~Taori, A.~Raghunathan, S.~Sagawa, P.~W. Koh, V.~Shankar, P.~Liang, Y.~Carmon, and L.~Schmidt, ``Accuracy on the line: On the strong correlation between out-of-distribution and in-distribution generalization,'' 2021.

\bibitem{GrettonBorgwardtEtAl2012}
A.~Gretton, K.~M. Borgwardt, M.~J. Rasch, B.~Sch\"{o}lkopf, and A.~Smola, ``A kernel two-sample test,'' {\em J. Mach. Learn. Res.}, vol.~13, p.~723–773, mar 2012.

\bibitem{Rabanser2019FailingLA}
S.~Rabanser, S.~G{\"u}nnemann, and Z.~C. Lipton, ``Failing loudly: An empirical study of methods for detecting dataset shift,'' in {\em NeurIPS}, 2019.

\bibitem{Kulinski2020FeatureSD}
S.~M. Kulinski, S.~Bagchi, and D.~I. Inouye, ``Feature shift detection: Localizing which features have shifted via conditional distribution tests,'' {\em ArXiv}, vol.~abs/2107.06929, 2020.

\bibitem{Kamulete2021TestFN}
V.~M. Kamulete, ``Test for non-negligible adverse shifts,'' {\em ArXiv}, vol.~abs/2107.02990, 2021.

\bibitem{FaridVeerEtAl2022}
A.~Farid, S.~Veer, and A.~Majumdar, ``Task-driven out-of-distribution detection with statistical guarantees for robot learning,'' in {\em Proceedings of the 5th Conference on Robot Learning}, vol.~164 of {\em Proceedings of Machine Learning Research}, pp.~970--980, PMLR, 08--11 Nov 2022.

\bibitem{Vovk2003TestingEO}
V.~Vovk, I.~Nouretdinov, and A.~Gammerman, ``Testing exchangeability on-line,'' in {\em ICML}, 2003.

\bibitem{Vovk2021TestingRO}
V.~Vovk, ``Testing randomness online,'' {\em Statistical Science}, 2021.

\bibitem{Vovk2021RetrainON}
V.~Vovk, I.~Petej, I.~Nouretdinov, E.~A. Helgee, L.~Carlsson, and A.~Gammerman, ``Retrain or not retrain: Conformal test martingales for change-point detection,'' in {\em COPA}, 2021.

\bibitem{Eliades2020AHB}
C.~Eliades and H.~Papadopoulos, ``A histogram based betting function for conformal martingales,'' in {\em COPA}, 2020.

\bibitem{Volkhonskiy2017InductiveCM}
D.~Volkhonskiy, E.~Burnaev, I.~Nouretdinov, A.~Gammerman, and V.~Vovk, ``Inductive conformal martingales for change-point detection,'' in {\em COPA}, 2017.

\bibitem{Fedorova2012PluginMF}
V.~Fedorova, A.~Gammerman, I.~Nouretdinov, and V.~Vovk, ``Plug-in martingales for testing exchangeability on-line,'' {\em ArXiv}, vol.~abs/1204.3251, 2012.

\bibitem{Ho2005AMF}
S.~S. Ho, ``A martingale framework for concept change detection in time-varying data streams,'' {\em Proceedings of the 22nd international conference on Machine learning}, 2005.

\bibitem{podkopaev2021tracking}
A.~Podkopaev and A.~Ramdas, ``Tracking the risk of a deployed model and detecting harmful distribution shifts,'' 2021.

\bibitem{hu2020distributionfree}
X.~Hu and J.~Lei, ``A distribution-free test of covariate shift using conformal prediction,'' 2020.

\bibitem{doob1971martingale}
J.~L. Doob, ``What is a martingale?,'' {\em The American Mathematical Monthly}, vol.~78, no.~5, pp.~451--463, 1971.

\bibitem{hall2014martingale}
P.~Hall and C.~C. Heyde, {\em Martingale limit theory and its application}.
\newblock Academic press, 2014.

\bibitem{vovk2005algorithmic}
V.~Vovk, A.~Gammerman, and G.~Shafer, {\em Algorithmic learning in a random world}.
\newblock Springer Science \& Business Media, 2005.

\bibitem{shafer2019game}
G.~Shafer and V.~Vovk, {\em Game-Theoretic Foundations for Probability and Finance}, vol.~455.
\newblock John Wiley \& Sons, 2019.

\bibitem{williams1991probability}
D.~Williams, {\em Probability with Martingales}.
\newblock Cambridge University Press, 1991.

\bibitem{yu1997assouad}
B.~Yu, ``{A}ssouad, {F}ano, and {L}e {C}am,'' in {\em Festschrift for Lucien Le Cam}, pp.~423--435, Springer, 1997.

\bibitem{CIFAR-100}
A.~Krizhevsky, V.~Nair, and G.~Hinton, ``Cifar-100 (canadian institute for advanced research),''

\bibitem{CIFAR-10}
A.~Krizhevsky, V.~Nair, and G.~Hinton, ``Cifar-10 (canadian institute for advanced research),''

\bibitem{WineQuality}
P.~Cortez, A.~Cerdeira, F.~Almeida, T.~Matos, and J.~Reis, ``Modeling wine preferences by data mining from physicochemical properties,'' {\em Decision Support Systems}, 2009.

\bibitem{this-paper}
R.~Luo, R.~Sinha, Y.~Sun, A.~Hindy, S.~Zhao, S.~Savarese, E.~Schmerling, and M.~Pavone, ``Online distribution shift detection via recency prediction.'' https://arxiv.org/abs/2211.09916, 2023.

\bibitem{xplane-simulator}
S.~M. Katz, A.~Corso, S.~Chinchali, A.~Elhafsi, A.~Sharma, M.~J. Kochenderfer, and M.~Pavone, ``{NASA ULI} {X}plane simulator.'' \url{https://github.com/StanfordASL/NASA_ULI_Xplane_Simulator}, 2021.

\bibitem{idss-standard}
{International Docking System Standard}, ``International docking system standard (idss) interface definition document (idd),'' Oct. 2016.
\newblock Available online: \url{https://www.internationaldockingstandard.com/download/IDSS-GUIDE-001_Revision_A_Release_to_Customer_01-26-17_TAGGED.pdf}.

\bibitem{Barber2022ConformalPB}
R.~F. Barber, E.~J. Cand{\`e}s, A.~Ramdas, and R.~J. Tibshirani, ``Conformal prediction beyond exchangeability,'' 2022.

\bibitem{xplanedataset}
S.~M. Katz, A.~Corso, S.~Chinchali, A.~Elhafsi, A.~Sharma, M.~J. Kochenderfer, and M.~Pavone, ``{NASA ULI} aircraft taxi dataset.'' \url{https://purl.stanford.edu/zz143mb4347}, 2021.

\end{thebibliography}
}

\newpage
\onecolumn
\selectfont
\section*{Appendix}

\setcounter{lemma}{0}
\section{Proofs}
\label{appendix:proofs}

\begin{lemma}
Let $(X_1, X_2, \dots)$ be a sequence of data points with $X_i \in \Xc$ and let $f: \Xc^2 \to \{0,1\}$ be a model that predicts which input in an unordered pair of data points $\{X, X'\}$ was more recent. Define the indicator random variable $Y \in \{0,1\}$ as in Equation \eqref{eq:indicator}, so that $Y = 1$ whenever $f$ correctly predicts which input from $X$ and $X'$ was more recent. If the sequence $\{X_1, X_2, \dots\}$ is exchangeable, then it holds that 
\begin{equation*}
    \mathrm{Pr}(Y = 1) = \frac{1}{2},
\end{equation*}
regardless of the choice of classifier $f$.
\end{lemma}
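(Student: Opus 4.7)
The intuition driving the lemma is that exchangeability makes the joint distribution of any two samples from the sequence invariant under swapping, so after the pair is randomly shuffled before being fed to $f$, the input that $f$ sees carries no information about which of the two points is the more recent one. The plan is to make the pair-ordering randomization explicit and then to conclude that $f$'s prediction is statistically independent of the true position of the newer sample, after which the $\tfrac{1}{2}$ probability will drop out by the law of total probability.

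Concretely, I would let $X$ and $X'$ denote the older and more recent sample, respectively, and introduce $L \in \{1, 2\}$, a uniform random label (independent of the data) recording the position at which $X'$ is placed inside the ordered pair $(Z_1, Z_2)$ actually passed to $f$; this is the formal stand-in for the ``unordered pair'' wording of the statement. The event $Y = 1$ is then exactly $\{ f(Z_1, Z_2) = L \}$. The crucial step is invoking exchangeability of $\{X_1, X_2, \dots\}$, which forces $(X, X') \stackrel{d}{=} (X', X)$. Consequently, the conditional law of $(Z_1, Z_2)$ given $L = 1$ coincides with its conditional law given $L = 2$, so $(Z_1, Z_2)$ is independent of $L$, and therefore so is the deterministic transformation $f(Z_1, Z_2)$.

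From here the conclusion is a one-line calculation:
\[
\mathrm{Pr}(Y = 1) \;=\; \sum_{\ell \in \{1, 2\}} \mathrm{Pr}\bigl(f(Z_1, Z_2) = \ell\bigr)\, \mathrm{Pr}(L = \ell) \;=\; \tfrac{1}{2}.
\]
The only subtle modeling question, and the main obstacle, is interpreting ``unordered pair'' for a classifier $f$ whose implementation necessarily sees an ordered input; the uniform shuffling variable $L$ is what reconciles the two viewpoints and should be made explicit before citing exchangeability. Once that convention is pinned down, no further ingredients are needed, and in particular $f$ can be completely arbitrary since the argument never uses any property of $f$ beyond measurability.
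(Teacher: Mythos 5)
Your proof is correct and takes essentially the same route as the paper's: both arguments come down to the observation that, by exchangeability, the ordering of the pair is a fair coin independent of the unordered input that $f$ actually sees, so $f$ is correct with probability exactly $\tfrac{1}{2}$. The paper formalizes this by conditioning on the unordered pair $\{X, X'\}$, applying total probability over the two equally likely orderings (using that the deterministic $f$ is correct for exactly one of them), and finishing with the tower rule, whereas you make the shuffle explicit via the label $L$ and argue independence of $f(Z_1, Z_2)$ from $L$ --- a cosmetic difference in formalization, not a different idea.
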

\begin{proof}
Suppose we are given any two observations $\{X, X'\} \in \Xc^2$ from the sequence $(X_1, X_2, \dots)$, where $X$ was recorded at timestep $t$ and $X'$ was recorded at $t'$ so that $t \neq t'$. Since the data series is exchangeable, the events $t < t'$ and $t > t'$ are equally likely, and $\mathrm{Pr}[t > t' \ | \ \{X, X'\}] = \frac{1}{2}$. 
Therefore,
\begin{align*}
    \mathrm{Pr}\big[Y = 1 | \{X, X'\} \big] &= \mathrm{Pr}\big[Y = 1 \ | \ t > t', \ \{X, X'\} \big] \cdot \mathrm{Pr} \big[t > t'\ | \ \{X,X'\} \big]  \\
    &\quad \quad \quad + \mathrm{Pr} \big[Y = 1 \ | \ t < t' , \{X,X'\} \big] \cdot \mathrm{Pr} \big[t < t' \ | \ \{X,X'\} \big]   && \text{(Total probability)}\\
    &= \frac{1}{2} \Big(\mathrm{Pr}\big[Y = 1 \ | \ t > t', \ \{X, X'\} \big] + \mathrm{Pr} \big[Y = 1 \ | \ t < t', \ \{X, X'\} \big] \Big)  && \text{($X, X'$ exchangeable)}\\
    &= \frac{1}{2}, && \text{($f$ is deterministic)}
\end{align*}
since $f$ is deterministic: its output will be the same given $\{X, X' \}$ regardless of whether $t < t'$ or $t > t'$, 
so its prediction will be correct with probability $1$ for exactly one of the cases $t < t'$ or $t > t'$ and with probability $0$ for the other. Finally, apply the Tower rule to see that 
\begin{align*}
    \mathrm{Pr}\big[Y = 1 \big] &= \mathrm{E}\big[Y \big] \\
    &= \mathrm{E}\Big[\mathrm{E}\big[Y | \{X,X'\}\big]\Big] \\
    &= \mathrm{E}\Big[\mathrm{Pr}\big[Y=1 | \{X,X'\}\big]\Big] \\
    &= \frac{1}{2}.
\end{align*}
\end{proof}

\vspace{3mm}

\begin{lemma}
    Let $(Y_1, Y_2, \dots)$ be a sequence of exchangeable and identically distributed Bernoulli random variables with $\mathrm{Pr}\big[Y_i = 1\big] = p$, and define $S_n := \sum_{i=1}^n Y_i$. Then the stochastic process $\{M_n\}_{n=1}^\infty$, with
    \begin{equation*}
        M_n = \frac{e^{t \cdot S_n}}{((1 - p)+ p e^t)^n}
    \end{equation*}
    is a Martingale.
\end{lemma}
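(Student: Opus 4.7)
The plan is to verify the two defining properties of a martingale directly with respect to the natural filtration $\mathcal{F}_n = \sigma(Y_1, \ldots, Y_n)$. Integrability is immediate since $M_n$ is non-negative and $S_n \leq n$, yielding the deterministic bound $M_n \leq e^{tn}/((1-p) + pe^t)^n < \infty$. The substantive work is verifying the martingale identity $E[M_{n+1} \mid \mathcal{F}_n] = M_n$.

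The key algebraic step is to isolate the one ``new'' random variable. Writing $S_{n+1} = S_n + Y_{n+1}$ and splitting the exponent accordingly, I would recast the process multiplicatively as
\[
M_{n+1} = M_n \cdot \frac{e^{t Y_{n+1}}}{(1-p) + p e^t}.
\]
Now $M_n$ is $\mathcal{F}_n$-measurable and so pulls out of the conditional expectation, while $Y_{n+1}$ is independent of $\mathcal{F}_n$, so its conditional expectation collapses to its unconditional one. What remains is to evaluate the moment generating function of a Bernoulli($p$) random variable at $t$: $E[e^{t Y_{n+1}}] = (1-p)\cdot 1 + p \cdot e^t$, which exactly cancels the denominator and leaves $E[M_{n+1} \mid \mathcal{F}_n] = M_n$.

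The only subtlety — and the step I would be most careful to flag explicitly — is that the hypothesis as stated reads ``exchangeable and identically distributed,'' which is strictly weaker than independence; by de Finetti's theorem, an infinite exchangeable Bernoulli sequence is a mixture of i.i.d. Bernoullis, and in that generality the conditional MGF need not equal the marginal MGF. The natural reading in context (and the one consistent with the paper's usage of the indicator variables $Y_k$ under the no-distribution-shift hypothesis, together with Lemma~\ref{lemma:bernoulli}) is that the $Y_i$ are i.i.d. Bernoulli($p$). I would make this independence assumption explicit at the top of the proof, after which the short calculation above closes out the argument with no further obstacles.
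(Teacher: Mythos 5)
Your proof is correct under the i.i.d.\ reading, but it takes a genuinely different route from the paper, and your flagged ``subtlety'' is in fact the crux. The paper does not assume independence; it invokes De Finetti's representation theorem to write the exchangeable sequence as a mixture of i.i.d.\ Bernoullis conditioned on a latent variable $Z$, performs the same multiplicative decomposition $M_{n+1} = M_n \cdot e^{tY_{n+1}}/((1-p)+pe^t)$ you use, conditions on $Z$ to get independence of $Y_{n+1}$ from the history, and then collapses $\mathrm{E}_Z\big[\mathrm{E}[e^{tY_{n+1}}\mid Z]\big]$ to the unconditional moment generating function via the tower rule. That last step is where the paper's argument is delicate: the outer expectation over $Z$ ought to be taken with respect to the posterior of $Z$ given $Y_1,\dots,Y_n$, not its prior, and for a non-degenerate mixture the conditional MGF $\mathrm{E}[e^{tY_{n+1}}\mid Y_1,\dots,Y_n]$ need not equal $(1-p)+pe^t$. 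Your worry is substantiated by a concrete counterexample: take the fair mixture of Bernoulli$(0)$ and Bernoulli$(1)$, so that marginally $p=\tfrac12$ but $S_n \in \{0,n\}$; conditioned on $Y_1=1$ one has $M_{n+1}/M_n = 2e^t/(1+e^t) \neq 1$, so $\{M_n\}$ is not a martingale for that exchangeable sequence. So the lemma as literally stated requires the i.i.d.\ interpretation (or at least conditional independence with known parameter), which is exactly the assumption you make explicit. Your direct computation --- integrability from the deterministic bound $S_n \le n$, pulling out the $\mathcal{F}_n$-measurable factor, and cancelling the Bernoulli MGF against the normalizer --- is clean and closes the argument; the paper's version buys apparent generality to exchangeable sequences, but that generality does not actually hold, so your more conservative framing is the sounder one.
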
 
\begin{proof}
    By De-Finetti's representation theorem, any sequence of exchangeable random variables can be written as a mixture of i.i.d. random variables, i.e. there exists a random variable $Z \in [0, 1]$ such that $Y_1, Y_2, \cdots$ are i.i.d. conditioned on $Z$. Then we have 
    \begin{align*}
        \mathrm{E}[M_{n+1} | M_1, \dots, M_n] &= \mathrm{E}_Z \left[ \mathrm{E}\Big[\frac{e^{tY_{n+1}}}{(1-p) + pe^t} M_n \ | \ M_1, \dots, M_n, Z\Big]  \right] && \text{(Def. of $M_n$) and tower property}\\
        &=\mathrm{E}_Z \left[  \mathrm{E}\Big[\frac{e^{tY_{n+1}}}{(1-p)+ pe^t} \ | \ M_1, \dots, M_n, Z \Big]  \right] M_n  && \text{(Take out what is known)} \\
        &= \mathrm{E}_Z \left[ \mathrm{E}\Big[\frac{e^{tY_{n+1}}}{(1-p) + pe^t} \mid Z\Big] \right] M_n  && \text{($Y_{n+1}$ is indep. of $Y_1, \dots, Y_n$ cond. on $Z$)} \\
        &=  \mathrm{E}\Big[\frac{e^{tY_{n+1}}}{(1-p) + pe^t} \Big]  M_n  && \text{(Tower)} \\
        &= \frac{(1-p) + p e^t}{(1-p) + p e^t} M_n = M_n && \text{(Evaluate the expectation)}
    \end{align*}
\end{proof}


\newpage
\twocolumn

\section{Synthetic Experiments}
\label{sec:experiments_synthetic}

In addition to the X-Plane and Free-Flyer robot experiments discussed in the main text, we empirically validate our method on several synthetic datasets and observe that it consistently outperforms prior work. 

\textbf{Datasets. } We use the CIFAR-100~\cite{CIFAR-100}, CIFAR-10~\cite{CIFAR-10}, and Wine Quality~\cite{WineQuality} datasets, which are standard benchmarks for existing work on distribution shift (e.g. \cite{Vovk2021RetrainON} evaluated their method on the Wine Quality dataset). The CIFAR-100 and CIFAR-10 datasets consist of 32x32x3 color images divided into 100 and 10 classes, respectively. To simulate various distribution shifts on the CIFAR datasets, we use the CIFAR-100-C and CIFAR-10-C datasets~\cite{hendrycks2019Benchmarking}, which are perturbed versions of the original CIFAR test sets. Each -C dataset includes 15 perturbed versions of the corresponding CIFAR test set, with perturbations such as brightness, Gaussian noise, motion blur, and fog. The Wine Quality dataset comprises 11-dimensional feature vectors for 4898 white and 1599 red wines. 

\textbf{Methods. } We compare our method against two baselines. The first is the CM method as described in~\cite{Vovk2021RetrainON}, using a nearest distance nonconformity score. The second is our modified CM-FV method, which uses a nearest distance nonconformity score applied to a much lower-dimensional feature vector extracted from a pre-trained neural network. 
For all methods, an alert is issued when the martingales reach a threshold of 100, in order to guarantee a false positive rate of $\leq 0.01$ (as explained in Section~\ref{sec:method}).

\textbf{Training Details. } The model that we use in our method for recency prediction in the CIFAR experiments is a simple convolutional neural network, with three 2D convolutional layers followed by two linear layers with a ReLU activation function. We train this predictor model with a batch size of 32, a constant learning rate of 1e-4, a binary cross-entropy loss function, and an Adam optimizer. All training is done on either a single Nvidia GeForce GTX TITAN X GPU or on a CPU (Macbook Pro M1 chip), since the training is not computationally expensive. 

\textbf{Experimental Setup. } We evaluate our method for detecting distribution shift using a simple neural network trained to distinguish between older and more recent images. For the CIFAR experiments, only unperturbed images are used during training, and perturbed images from a perturbation in the corresponding CIFAR-C dataset are used at test time. 
For the Wine Quality experiment, white wines are used during training and red wines are used at test time. We run 100 trials of each experiment for every perturbation in the CIFAR-100-C and CIFAR-10-C datasets, as well as for the white to red wine distribution shift, and average over the results.

\begin{figure}[htb]
    \centering
    \includegraphics[width=0.9\linewidth]{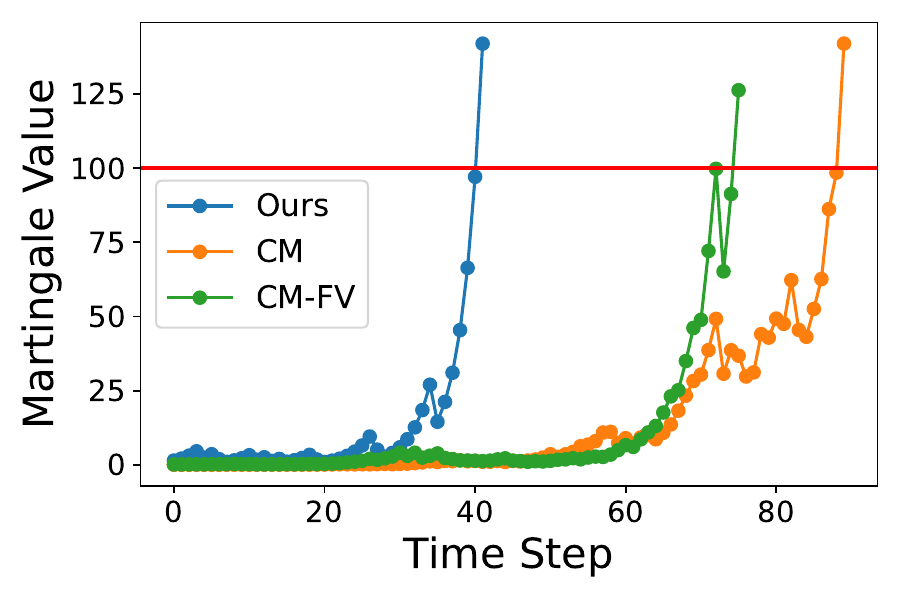}
    \caption{Martingale values for \textcolor{blue}{our method} (\textcolor{blue}{blue}), the \textcolor{orange}{CM method (\textcolor{orange}{orange})}, and the modified CM method \textcolor{green}{CM-FV} (\textcolor{green}{green}). An alert is issued when the martingales reach the threshold of 100. In this example, our method issues an alert at time step 42, the CM method issues an alert at time step 90, and the CM-FV method issues an alert at time step 76.}
    \label{fig:cifar10_motion_blur}
\end{figure}

\textbf{Results. } Our method consistently outperforms the CM and CM-FV methods. Tables~\ref{tab:cifar-100-time-steps} and~\ref{tab:cifar-10-time-steps} summarize our results for different perturbations of CIFAR-100 and CIFAR-10 over 100 trials. The numbers shown in the tables are the mean number of time steps needed for an alert and the false negative rate (in parentheses). The mean number of time steps is calculated over only the true positives (successful alerts); i.e. a failure to issue an alert is not included in the mean. If a method fails to issue an alert in over 95\% of the trials for some perturbation, we do not compute the mean and instead consider the method a failure for that perturbation. Note that this methodology artificially improves the computed means for the CM method, which sometimes fails to issue an alert (and would therefore otherwise include some very large numbers in the mean computation).

Out of the 15 perturbations of CIFAR-100, our method issues the quickest alert for 13 of the pertubations. Our method takes an average of 68.8 time steps after the distribution shift to issue an alert, while the CM method takes an average of 125.1 time steps, and the CM-FV method takes an average of 195.7 time steps. Note that for three of the perturbations, the CM method fails to detect the distribution shift at all; these perturbations are not counted in computing the overall average number of time steps taken by the CM method. Our method detects the distribution shift for all perturbations, and the CM-FV method detects the distribution shift at least 5\% of the time for all perturbations. 
Of the 15 perturbations of CIFAR-10, our method issues the quickest alert for 9 perturbations. Our method takes on average 59.4 time steps, while the CM method takes on average 128.0 time steps, and the CM-FV method takes on average 79.0 time steps. For one of the perturbations, the CM method fails to detect the distribution shift (and for two of the other perturbations, it fails to detect the distribution shift the majority of the time). Our method detects the distribution shift for all perturbations, and the CM-FV method detects the distribution shift the majority of the time for all perturbations. 
In Fig.~\ref{fig:cifar10_motion_blur}, we show an example plot for CIFAR-10 with the ``motion blur'' perturbation, demonstrating the martingale growth for our method, the CM method, and the CM-FV method. Our martingale grows more rapidly and issues an alert in fewer time steps. 

We note that our method generally issues an alert quickly across different perturbations, with a few exceptions for very difficult perturbations. For example, with the jpeg compression perturbation, it is very difficult to distinguish between the compressed and uncompressed images, particularly because the CIFAR images are only 32x32x3. In this case, our method may take longer to issue an alert; however, a perturbation of this magnitude may allow for more time before a problem occurs.

For the Wine Quality dataset, our method takes 16.4 time steps to issue an alert, while the CM method takes 22.1 time steps (on average over the 100 trials). (Since the Wine Quality dataset is already low-dimensional, we simply run the CM method as in~\cite{Vovk2021RetrainON}, rather than the CM-FV method.)

\def\hs{\hspace{-0.2cm}}
\begin{table}[t]
    \begin{center}
        \begin{tabular}{ l r l r l r l}
            \toprule
            \multicolumn{7}{c}{\textbf{Mean Time Steps Needed for Alert (100 Trials)}} \\
            \multicolumn{7}{c}{\textbf{CIFAR-100}} \\
            & \multicolumn{2}{c}{Ours} & \multicolumn{2}{c}{CM} & \multicolumn{2}{c}{CM-FV} \\  
            \midrule 
            Brightness	        & \textbf{21.0} & \hs(0)  & 272.8 & \hs(0.81)    & 391.6 & \hs(0.93) \\
            Contrast	        & \textbf{23.4} & \hs(0)  & 23.8  & \hs(0)       & 303.0 & \hs(0.53) \\
            Defocus Blur	    & \textbf{52.4} & \hs(0)  & 126.7 & \hs(0)       & 255.5  & \hs(0.38) \\
            Elastic Transform	& 168.7 & \hs(0)          & 264.2 & \hs(0.44)    & \textbf{139.9}  & \hs(0) \\
            Fog	                & \textbf{24.5} & \hs(0)  & 27.8  & \hs(0)       & 214.6  & \hs(0.06) \\
            Frost	            & \textbf{22.2} & \hs(0)  & 55.4  & \hs(0)       & 189.0  & \hs(0) \\
            Gaussian Noise	    & \textbf{43.5} & \hs(0)  & 86.2  & \hs(0)       & 107.1  & \hs(0) \\
            Glass Blur	        & \textbf{66.1} & \hs(0)  & --    & \hs(0.97)    & 97.3  & \hs(0) \\
            Impulse Noise	    & \textbf{32.8} & \hs(0)  & 36.7  & \hs(0)       & 98.5  & \hs(0) \\
            Jpeg Compression	& 312.6 & \hs(0)          & --    & \hs(0.99)    & \textbf{117.1}  & \hs(0) \\
            Motion Blur	        & \textbf{57.6} & \hs(0)  & 112.9  & \hs(0)      & 257.2  & \hs(0.23) \\
            Pixelate	        & \textbf{64.8} & \hs(0)  & --    & \hs(0.97)    & 196.3  & \hs(0.01) \\
            Shot Noise	        & \textbf{44.7} & \hs(0)  & 90.6  & \hs(0)       & 102.0  & \hs(0) \\
            Snow	            & \textbf{24.2} & \hs(0)  & 269.6 & \hs(0.48)    & 181.0  & \hs(0) \\
            Zoom Blur	        & \textbf{73.6} & \hs(0)  & 134.5 & \hs(0)       & 286.3  & \hs(0.04) \\
            \midrule
            Overall             & \textbf{68.8} & \hs(0)  & 125.1 & \hs(0.32)    & 195.7 & \hs(0.15) \\  
            \bottomrule
        \end{tabular}
    \end{center}
\caption{Mean number of time steps after a distribution shift occurs before an alert is issued, for our method, the CM method, and the CM-FV method (lower is better, best results shown in \textbf{bold}), under various perturbations of the CIFAR-100 dataset. The false negative rate for each method is shown in parentheses; mean time steps are computed over the true positive alerts. 100 trials are run for each experiment, and the mean number of time steps averages only over successful alerts. Note that the CM method fails to detect a distribution shift within 500 samples for three of the perturbations. Our method significantly outperforms both the CM and the CM-FV methods.}
\label{tab:cifar-100-time-steps}
\end{table}

\begin{table}[t]
    \begin{center}
        \begin{tabular}{ l r l r l r l}
            \toprule
            \multicolumn{7}{c}{\textbf{Mean Time Steps Needed for Alert (100 Trials)}} \\
            \multicolumn{7}{c}{\textbf{CIFAR-10}} \\
            & \multicolumn{2}{c}{Ours} & \multicolumn{2}{c}{CM} & \multicolumn{2}{c}{CM-FV} \\  
            \midrule 
            Brightness	        & \textbf{17.4} & \hs(0)    & 177.3 & \hs(0.11)    & 226.7 & \hs(0.22) \\
            Contrast	        & \textbf{19.6} & \hs(0)    & 23.2 & \hs(0)        & 224.4 & \hs(0.18) \\
            Defocus Blur	    & \textbf{41.4} & \hs(0)    & 100.3 & \hs(0)       & 111.2  & \hs(0) \\
            Elastic Transform	& 119.6 & \hs(0)            & 218.3 & \hs(0.18)    & \textbf{42.1}  & \hs(0) \\
            Fog	                & \textbf{22.3} & \hs(0)    & 28.1 & \hs(0)        & 89.5  & \hs(0) \\
            Frost	            & \textbf{20.3} & \hs(0)    & 64.5 & \hs(0)        & 52.9  & \hs(0) \\
            Gaussian Noise	    & 43.2 & \hs(0)             & 87.5 & \hs(0)        & \textbf{32.9}  & \hs(0) \\
            Glass Blur	        & 55.7 & \hs(0)             & 311.8 & \hs(0.79)    & \textbf{27.6}  & \hs(0) \\
            Impulse Noise	    & \textbf{29.4} & \hs(0)    & 38.5 & \hs(0)        & 32.6  & \hs(0) \\
            Jpeg Compression	& 315.8 & \hs(0)            & --  & \hs(1)         & \textbf{36.1}  & \hs(0) \\
            Motion Blur	        & \textbf{41.9} & \hs(0)    & 100.4 & \hs(0)       & 79.4  & \hs(0) \\
            Pixelate	        & 53.9 & \hs(0)             & 302.5 & \hs(0.85)    & \textbf{47.8}  & \hs(0) \\
            Shot Noise	        & 39.6          & \hs(0)    & 93.3 & \hs(0)        & \textbf{31.7}  & \hs(0) \\
            Snow	            & \textbf{20.1} & \hs(0)    & 120.1 & \hs(0)       & 54.2  & \hs(0) \\
            Zoom Blur	        & \textbf{50.5} & \hs(0)    & 126.4 & \hs(0)       & 95.4  & \hs(0) \\
            \midrule
            Overall             & \textbf{59.4} & \hs(0)    & 128.0 & \hs(0.20)     & 79.0 & \hs(0.03) \\  
            \bottomrule
        \end{tabular}
    \end{center}
\caption{Mean number of time steps after a distribution shift occurs before an alert is issued, for our method, the CM method, and the CM-FV method (lower is better, best results shown in \textbf{bold}), under various perturbations of the CIFAR-10 dataset. The false negative rate for each method is shown in parentheses; mean time steps are computed over the true positive alerts. 100 trials are run for each experiment, and the mean number of time steps averages only over successful alerts. Note that the CM method fails to ever detect a distribution shift within 500 samples for one of the perturbations. Our method significantly outperforms both the CM and the CM-FV methods.}
\label{tab:cifar-10-time-steps}
\end{table}


\section{X-Plane Simulator Experiments}\label{ap:x-plane}

\subsection{Daytime to Nighttime Shift: Additional Details } 
We use the X-Plane 11 flight simulator and NASA's XPlaneConnect Python API to create 1000 simulated video sequences taken from a camera attached to the outside of the plane as it taxis down the runway at different times throughout the day (with different weather conditions, starting positions, etc.)~\cite{xplane-simulator}. The first 295 sequences take place in the morning (8:00am-12:00pm), the next 344 sequences take place in the afternoon (12:00pm-5:00pm), and the last 361 sequences take place at night (5:00pm-10:00pm). Each taxiing sequence consists of approximately 30 images of size 200x360x3 (note that these images are much larger than those in the CIFAR dataset). 
We combine the morning and afternoon data points to form the training dataset with a total of 639 data points. These are then divided into 213 randomly sampled ``unseen'' images (to pair with the test time images), and 213 image pairs for training the neural network model.
Fig.~\ref{fig:xplane_images} shows example images from the morning, afternoon, and night. 

\begin{figure*}[!tb]
    \centering
    \begin{subfigure}[b]{0.325\textwidth}
        \centering
        \includegraphics[width=\textwidth]{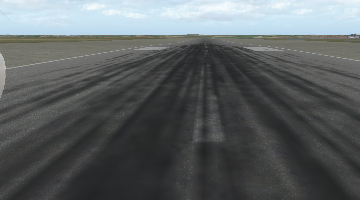}
        \caption{Morning}
        \label{fig:morning}
    \end{subfigure}
    \hfill
    \begin{subfigure}[b]{0.325\textwidth}
        \centering
        \includegraphics[width=\textwidth]{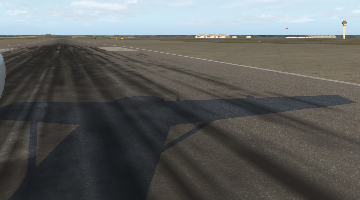}
        \caption{Afternoon}
        \label{fig:afternoon}
    \end{subfigure}
    \hfill
    \begin{subfigure}[b]{0.325\textwidth}
        \centering
        \includegraphics[width=\textwidth]{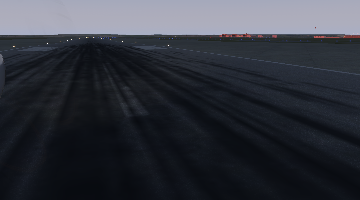}
        \caption{Night}
        \label{fig:night}
    \end{subfigure}
    \caption{Sample X-Plane 11 images with a distribution shift caused by gradually changing lighting conditions. }
    \label{fig:xplane_images}
    \vspace{-3mm}
\end{figure*}

\subsection{Camera Angle Shift: Additional Details}  
In this set of simulations, we compare the growth of our martingale with and without a distribution shift, where the distribution shift is a change in the camera angle. We again use the X-Plane 11 flight simulator to create 600 video sequences taken from a camera attached to the outside of the plane as it taxis down the runway~\cite{xplane-simulator}. These sequences occur at randomly initialized times between 8:00am and 10:00pm. 
Of these 600 sequences, 400 are taken with a standard camera angle, and 200 are taken with a slightly perturbed camera angle (see Fig.~\ref{fig:xplane_images_camera}), simulating the camera being knocked slightly askew. Each taxiing sequence consists of approximately 30 images of size 200x360x3, and we randomly sample one image from each sequence.
At training time, we observe 200 samples with the standard camera angle. At test time, we observe either 200 samples with the perturbed camera angle (a distribution shift), or 200 different samples with the standard camera angle (no distribution shift). This experiment can be thought of as simulating a calibrated camera setup in the standard case, and a camera that has been knocked slightly askew in the perturbed case.

\subsection{No Distribution Shift: Additional Details} 
We again use the X-Plane 11 flight simulator to create 1000 video sequences taken from a camera attached to the outside of the plane as it taxis down the runway~\cite{xplane-simulator}. All sequences occur in the morning (with randomized starting times, starting positions, and weather conditions), and there is no distribution shift. Each taxiing sequence consists of approximately 30 images of size 200x360x3, and we randomly sample one image from each sequence. 
At training time, we observe 200 samples from randomly sampled episodes in the generated dataset. At test time, we observe 200 samples from different randomly sampled episodes in the generated dataset. 

\subsection{Model and Training } 
The predictor model that we use in our method for recency prediction in the X-Plane experiments (with images of size 200x360x3) is a simple convolutional neural network, with four 2D convolutional layers followed by two linear layers with a ReLu activation function (see Figure~\ref{fig:cnn_diagram}). We train this predictor model with a batch size of 32, a constant learning rate of 1e-4, 
a binary cross-entropy loss function, and an Adam optimizer. All training is done on either a single Nvidia GeForce GTX TITAN X GPU or on a CPU (Macbook Pro M1 chip), since the training is not computationally expensive. 

Note that we simply update our model with every prediction as we obtain additional information (as opposed to retraining from scratch with each data point). Because our task is simply to discriminate between older and more recent samples, it is not a very complicated learning problem, and the update will necessarily be much less expensive than the downstream task being accomplished with the samples. 
Using the simple neural network architecture described above, each model update takes under two seconds on a Macbook Pro M1 laptop CPU. 
Thus, we found the associated computational cost very small even on high-dimensional data.

For all methods, an alert is issued when the martingales reach a threshold of 100, in order to guarantee a false positive rate of $\leq 0.01$ (as explained in Section~\ref{sec:method}). 

Since we want to benchmark the ability of our method and existing algorithms to detect shifts in anticipation of system failures, we compare these methods on vision data generated by running the PID controller with ground truth state information. This ensures that the monitors cannot simply detect a shift because the aircraft has failed (i.e. because the state distribution has changed so much that the images have hangars filling the frame rather than the runway); rather, they need to detect the environmental changes degrading the vision system.

\begin{figure}[h]
    \centering
    \includegraphics[width=0.25\linewidth]{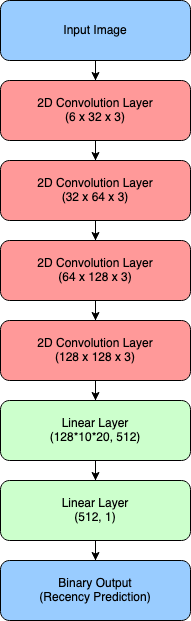}
    \caption{Model architecture for the predictor model used in our method for the X-Plane experiments. }
    \label{fig:cnn_diagram}
\end{figure}

\subsection{Additional Experimental Results} 
\textbf{Failure Caused by Distribution Shifts: }
We also run simulations on the X-Plane 11 flight simulator of an autonomous aircraft that uses a pre-trained neural network called TaxiNet \cite{xplanedataset}, along with a PID controller, to taxi down the runway from vision. The purpose of this experiment is to illustrate that the distributional shifts in \ref{sec:experiments_xplane}.A and \ref{sec:experiments_xplane}.B cause a deep-learned, vision-based controller to fail, and that in the setting in \ref{sec:experiments_xplane}.A, our method detects the shift before a failure occurs. To this end, the TaxiNet model has been trained to output the current cross-track error, that is, the distance to the center line of the runway, exclusively from input dashcam image recorded in the morning (8am-12pm). The PID controller then regulates the perceived distance to the center line. The labels to train TaxiNet were recorded from the simulator's ground-truth state. 

We first induce the day-night distribution shift in section \ref{sec:experiments_xplane}.A, by sampling test time episodes between 5:00pm and 10:00pm. We find that the plane consistently fails to taxi down the runway after about 6:20pm. Our method issues an alert on average at 5:12pm, the CM-FV method issues an alert on average at 5:30pm, and the CM method issues an alert on average at 7:25pm (see Table~\ref{tab:xplane_alert_time}). Thus, in this example, our method and the CM-FV method issue alerts before a failure occurs, while the CM method does not. Secondly, when we induce a shift by knocking the camera askew as in \ref{sec:experiments_xplane}.B, the TaxiNet control stack consistently fails.

\begin{table} [htbp]
	\begin{center}
		\begin{tabular}{c c c}
			\toprule
			\multicolumn{3}{c}{\textbf{Mean Alert Time, X-Plane Day-to-Night Shift}} \\
			\hspace{5mm} Ours \hspace{5mm} & \hspace{5mm} CM \hspace{5mm} &  CM-FV \\  
			\midrule 
            5:12pm  & 7:25pm    &  5:30pm\\
            \bottomrule
		\end{tabular}
	\end{center}
	\caption{Mean time at which an alert is issued for our method, the CM method, and the CM-FV method. Failure occurs at 6:20pm. Our method issues an alert 68 minutes before the failure, the CM-FV method issues an alert 50 minutes before the failure, and the CM method does not issue an alert until 65 minutes after the failure has occurred. }
	\label{tab:xplane_alert_time}
\end{table}

\textbf{Neural Network Architecture Ablations: }
We also ran several additional hyperparameter sweeps over the neural network architecture and other hyperparameters, and we found that overall, our results do not change significantly. We varied the learning rate from 1e-3 to 1e-5, the batch size from 8 to 128, and several parameters of the neural network architecture (including the number of convolutional layers, the size of the convolutional layers, and the stride). For the X-plane daytime to nighttime shift experiment, the average number of time steps taken by our method before an alert is issued ranges from 13.4 to 17.1. For the X-plane camera angle shift experiment, the average number of time steps taken by our method before an alert is issued ranges from 19.3 to 25.1. For the X-plane experiment with no distribution shift, our method never falsely issues an alert. All of these results are consistently faster than the other baseline methods in situations where an alert should be raised. Our results are summarized in Table~\ref{tab:nn_ablations}.

Note that because we do not constrain \textit{how} the distribution might change, it is impossible to make guarantees about how quickly any model will become sensitive to these distribution shifts. For example, an unconstrained adversary will be able to choose a shift that will evade detectability, regardless of the algorithm used (whether learning-based or not). However, such adversarial shifts are unlikely to occur in practice. Instead, we show in our work that by using the most powerful detection methods that currently exist (i.e., deep learning), we achieve better results than any alternative methods on the types of shifts that one would expect in practice.

\section{Free-Flyer Hardware Experiments}\label{ap:free-flyer}

To collect the initial image data, we randomly positioned the robot at different locations on the granite table while ensuring that the visual target stayed within the field of view of the camera. We captured 10000 images, along with the associated heading and position offsets, $[\Delta x, \Delta y \Delta \theta]$ using the Optitrack motion capture system. 

\newpage 
\onecolumn

\newcommand{\multrow}[1]{\begin{tabular}{@{}c@{}} #1 \end{tabular}}
\renewcommand{\arraystretch}{1.2}

\begin{table}[ht]
\centering
\begin{tabular}{|*{5}{c|}}
    \hline
    \multicolumn{2}{|c|}{}   & \textbf{Day-Night}  & \textbf{Camera Angle}  & \textbf{No Shift}  \\
    \hline
    \multicolumn{2}{|c|}{\textbf{Baseline:} \multrow{Batch size = 32 \\ Convergence = 1e-4 \\ Learning rate = 1e-4}}  & 13.9 & 21.8 & --- \\
    \hline 
    \multirow[c]{4}{*}{\textbf{Learning Rate}}           
            &   1e-3    &    15.5       &     21.2      &      ---     \\  
            &   5e-4    &    14.8       &     19.4      &      ---     \\  
            &   5e-5    &    14.2       &     25.1      &      ---     \\
            &   1e-5    &    15.8       &     21.7      &      ---     \\
    \hline
    \multirow[c]{4}{*}{\textbf{Batch Size}}           
            &   8       &    14.8       &     21.2      &      ---     \\  
            &   16      &    14.1       &     21.6      &      ---     \\  
            &   64      &    14.4       &     23.2      &      ---     \\  
            &   128     &    15.1       &     23.2      &      ---     \\  
    \hline
    \multirow[c]{5}{*}{\textbf{NN Architecture}}    
            & \multrow{Conv: \\ 3 x 16 x 3 \\ 16 x 32 x 3 \\ 32 x 64 x 3 \\ 64 x 128 x 3 \\ FC: \\ ... x 512 \\ 512 x 1} & 13.8 & 21.7 & --- \\   
            \cline{2-5}
            & \multrow{Conv: \\ 3 x 32 x 7 \\ 32 x 64 x 5 \\ 64 x 128 x 3 \\ 128 x 128 x 3 \\ FC: \\ ... x 512 \\ 512 x 1} & 13.4 & 20.3 & --- \\   
            \cline{2-5}
            & \multrow{Default configuration, \\ but replacing max-pool \\ with stride-2} & 14.4 & 19.3 & --- \\   
            \cline{2-5}
            & \multrow{Conv: \\ 3 x 32 x 3 (stride-2) \\ 32 x 32 x 3 \\ 32 x 64 x 3 (stride-2) \\ 64 x 64 x 3 \\ 64 x 128 x 3 (stride-2) \\ 128 x 128 x 3 \\ 128 x 128 x 3 (stride-2) \\ FC: \\ ... x 512 \\ 512 x 1} & 17.1 & 24.6 & --- \\   
            \cline{2-5}
            & \multrow{Conv: \\ 3 x 32 x 3 (stride-2) \\ 32 x 64 x 3 (stride-2) \\ 64 x 128 x 3 (stride-2) \\ FC: \\ ... x 512 \\ 512 x 1} & 14.1 & 21.1 & --- \\
    \hline
\end{tabular}
\caption{Results on the X-plane experiments with different hyperparameters and neural network architectures, averaged over 10 trials. Overall, the results do not vary significantly. Our method never raises an alert when there is no distribution shift.}
\label{tab:nn_ablations}
\end{table}

\end{document}